\documentclass[11pt]{article}


\usepackage[utf8]{inputenc} 
\usepackage[T1]{fontenc}
\usepackage{times}

\usepackage{amsmath,amssymb,amsthm}
\usepackage{mathtools}
\usepackage{bm}
\usepackage{dsfont}
\usepackage{bbm}

\usepackage{graphicx}
\usepackage{subfigure}
\usepackage{xcolor}
\usepackage{booktabs}
\usepackage{nicefrac}
\usepackage{microtype}
\usepackage{parskip}
\usepackage{fullpage}

\usepackage{algorithm2e}
\usepackage{float}
\newfloat{Algorithm}{h}{lop}

\usepackage{sidecap}
\sidecaptionvpos{figure}{t}

\usepackage[numbers]{natbib}

\usepackage[colorlinks=true,linkcolor=blue,urlcolor=blue,citecolor=green]{hyperref}
\usepackage[capitalize]{cleveref}

\theoremstyle{plain}
\newtheorem{theorem}{Theorem}[section]
\newtheorem{lemma}[theorem]{Lemma}
\newtheorem{proposition}[theorem]{Proposition}

\theoremstyle{definition}
\newtheorem{definition}[theorem]{Definition}

\theoremstyle{remark}

\newenvironment{keywords}{\par\noindent\textbf{Keywords:} }{\par}
\newcommand{\acks}[1]{\section*{Acknowledgments}#1}





\newtheorem{assumption}[theorem]{Assumption}
\newtheorem{fact}[theorem]{Fact}

\newcommand{\eat}[1]{}

\newcommand{\R}{\mathbb{R}}

\newcommand{\calA}{\mathcal{A}}
\newcommand{\calD}{\mathcal{D}}

\newcommand{\calN}{\mathcal{N}}

\newcommand{\calX}{\mathcal{X}}

\newcommand{\poly}{\mathrm{poly}}





\newcommand{\norm}[1]{\lVert #1 \rVert}







\newcommand{\Esymb}{\mathbb{E}}

\DeclareMathOperator*{\E}{\Esymb}




\newcommand{\eps}{\varepsilon}
\renewcommand{\epsilon}{\varepsilon}








\newif\ifnotes\notesfalse

\ifnotes
\usepackage{color}
\definecolor{mygrey}{gray}{0.50}
\newcommand{\notename}[2]{{\textcolor{red}{\footnotesize{\bf (#1:} {#2}{\bf ) }}}}

\newcommand{\anote}[1]{{\notename{Aravindan}{#1}}}
\newcommand{\bnote}[1]{{\notename{Bob}{#1}}}
\else

\newcommand{\notename}[2]{{}}

\newcommand{\enote}[1]{}
\newcommand{\vnote}[1]{}
\newcommand{\bnote}[1]{}
\newcommand{\anote}[1]{}

\fi

\newcommand{\RR}{\mathbb{R}}
\newcommand{\EE}{\mathbb{E}}
\newcommand{\PP}{\mathbb{P}}
\newcommand{\NN}{\mathbb{N}}
\newcommand{\OPT}{\mathrm{OPT}}

\newcommand{\calH}{\mathcal{H}}
\newcommand{\calG}{\mathcal{G}}
\newcommand{\lrangle}[1]{\left< #1 \right>}
\newcommand{\lrparenth}[1]{\left( #1 \right)}
\newcommand{\lrbracket}[1]{\left[ #1 \right]}
\newcommand{\abs}[1]{\left| #1 \right|}
\newcommand{\indicator}[1]{\mathbbm{1}\left\{ #1 \right\}}
\newcommand{\bfzero}{\mathbf{0}}

\title{Agnostic Learning of Arbitrary ReLU Activation under Gaussian Marginals}
\author{%
  Anxin Guo\\
  Northwestern University\\
  \texttt{anxinbguo@gmail.com}
  \and
  Aravindan Vijayaraghavan\\
  Northwestern University\\
  \texttt{aravindv@northwestern.edu}
}
\date{}

\begin{document}
\maketitle

\begin{abstract}
    We consider the problem of learning an arbitrarily-biased ReLU activation (or neuron) over Gaussian marginals with the squared loss objective. Despite the ReLU neuron being the basic building block of modern neural networks, we still do not understand the basic algorithmic question of whether an arbitrary ReLU neuron is learnable in the non-realizable setting. In particular, all existing polynomial time algorithms only provide approximation guarantees for the better-behaved unbiased setting or restricted bias setting. 

Our main result is a polynomial time statistical query (SQ) algorithm that gives the first constant factor approximation for arbitrary bias. It outputs a ReLU activation that achieves a loss of $O(\mathrm{OPT}) + \varepsilon$ in time $\mathrm{poly}(d,1/\varepsilon)$, where $\mathrm{OPT}$ is the loss obtained by the optimal ReLU activation. Our algorithm presents an interesting departure from existing algorithms, which are all based on gradient descent and thus fall within the class of correlational statistical query (CSQ) algorithms. We complement our algorithmic result by showing that no polynomial time CSQ algorithm can achieve a constant factor approximation. Together, these results shed light on the intrinsic limitation of gradient descent, while identifying arguably the simplest setting (a single neuron) where there is a separation between SQ and CSQ algorithms.

\end{abstract}

\begin{keywords}
agnostic learning, general ReLU activation, CSQ-SQ separation
\end{keywords}

\section{Introduction}\label{sec:intro}
\bnote{Changes: argued dependence on $W$ on page 3;\\
added citation to Zarifis et al. on page 4;\\
removed assumption $|y| \leq B$, rewrote the sample bound, and added an WLOG bounded $|y|$ argument for it on pages 28 - 30;\\
mentioned finite sample estimate in algorithm 1;\\
added linear regression in algorithm 1;\\
also mentioned linear regression in the proof of main theorem (page 35 - 36)}

\bnote{TODO: acknowledgment? Also, should we indicate funding sources? }

The Rectified Linear Unit (ReLU) is a predominant activation function in machine learning. A ReLU neuron has two parameters---a vector $w \in \R^d$ and a bias $b \in \R$---and acts on an input $x \in \R^d$ via
\begin{equation} 
\sigma(\lrangle{x,w} + b), \text{ where for all } z \in \R, \sigma(z) = \max\{z,0\}. 
\end{equation}
Given a distribution $\calD$ over samples $(x,y) \in \R^d \times \R$, the loss incurred by a ReLU function with parameters $w \in \R^d, b \in \R$ on distribution $\calD$ is given by the least squares error
\begin{equation} L(w,b) \coloneq \frac{1}{2} \E_{(x,y) \sim \calD}\Big[ \big(y- \sigma(\lrangle{x,w} + b) \big)^2\Big], \text{ and } \OPT= \min_{\substack{w \in \R^d, b \in \R:\\ \norm{w}_2\leq W}} L(w,b) \label{eq:intro:formulation}
\end{equation}
is the error of the best-fit ReLU function for the given data distribution $\calD$, and 
$W$ is an upper bound on the norm of the weight vector (which one can think of as being a large polynomial in $d$).  
The interesting setting when $\OPT >0$ is called the agnostic, or non-realizable, setting of the problem~\citep{KearnsSS94-agnostic-learning}. 
The goal in agnostic learning is to design an algorithm that takes i.i.d. samples from $\calD$ and outputs a ReLU neuron whose loss is competitive with the best-fit error $\OPT$, e.g., achieving loss $\OPT + \eps$ or $O(\OPT) + \eps$.

The problem of agnostic learning of a ReLU activation, also called ReLU regression, has been studied extensively over the past decade.
 This problem is computationally intractable without additional assumptions on the marginal distribution of $x$. Most algorithmic works study the setting where the marginal distribution on $x$ is a standard Gaussian, or make similar distributional assumptions~\citep{DiakonikolasGKK20-agnostic-learning-unbiased-relu-constant-convex-surrogate, FreiCG20-agnostic-learning-unbiased-relu-OPT2/3, AwasthiTV-ICLR23-ReLUGD}. 

Despite the ReLU neuron being the basic building block of modern neural networks, we still do not understand the basic algorithmic question of whether an arbitrary ReLU neuron is learnable in the non-realizable setting. In particular, we are not aware of any polynomial-time algorithm that achieves a non-trivial multiplicative approximation for the best-fit ReLU with arbitrary bias.\footnote{The existing state-of-the-art algorithms incur approximation factors that depend polynomially on $1/\OPT$ when the bias is very negative~\citep{AwasthiTV-ICLR23-ReLUGD}. } The main question we address is: 

{\em Can we design a polynomial time algorithm that learns an arbitrary ReLU activation under Gaussian marginals that achieves approximately optimal loss of $O(\OPT)$, where $\OPT$ is defined in \eqref{eq:intro:formulation}? }

Over the past decade there have been several algorithmic results in the unbiased setting, i.e., when the bias $b=0$~\citep{goel2019learning, FreiCG20-agnostic-learning-unbiased-relu-OPT2/3, DiakonikolasGKK20-agnostic-learning-unbiased-relu-constant-convex-surrogate, DiakonikolasKTZ22-agnostic-learning-unbiased-activations-GD, GollakotaGKS23-SIM-agnostic-1st, WangZDD23-agnostic-learning-unbiased-activations-constant-SGD-convex-surrogate, ZarifisWDD24-SIM-agnostic-2nd, WangZDD2024-agnostic-gaussian-SIM}. In particular, the algorithm of \citet{DiakonikolasGKK20-agnostic-learning-unbiased-relu-constant-convex-surrogate} gives the first efficient $O(\OPT)$ guarantee by minimizing a convex surrogate loss. On the other hand, there is evidence that $\OPT+\eps$ may be computationally intractable \citep{DiakonikolasKPZ-COLT21-OptimalityOfPolyRegression, DiakonikolasKR23-cryptographic-hardness-Gaussian-ReLU}. To the best of our knowledge, the only prior algorithmic result on learning a negatively biased ReLU neuron is the recent work of \citet{AwasthiTV-ICLR23-ReLUGD}, which handles the moderate-bias setting when $|b| =O(1)$. 
In fact, we are aware of few algorithmic results for agnostic learning of \textit{any linear model} in the arbitrary bias setting. (One notable exception is the line of work initiated by \citet{DiakonikolasKS18a-biased-halfspace-nasty-noise} that gives an $O(\OPT)$ agnostic learning guarantee for general linear halfspaces.) The arbitrary bias setting seems to represent a common challenge across many problems in computational learning theory. We refer to \cref{sec:appendix-related-works} for other related works, including those related to single-index models. 

Our main result gives an affirmative answer to the above question. 


\begin{theorem} [SQ algorithm that gets $O(\OPT)$, informal] \label{thm:intro:main}
There exists a constant $\alpha$, such that for all $W > 0$ the following holds. Let $\calD$ be the joint distribution of $(x,y)\in \RR^d\times \RR$, where the $x$-marginal is $\calN(0, I_d)$. 
Algorithm \ref{alg:full-alg} 
uses $\poly(d, \frac{1}{\eps}, \frac{1}{\delta}, W)$ time and samples from $\calD$, and with probability at least $1 - \delta$, outputs parameters $\hat w\in \RR^d, \hat b\in \RR$, such that:
    \[ L(\hat w, \hat b) \leq \alpha\cdot\inf_{\substack{w \in \R^d, b \in \R:\\ \norm{w}_2\leq W}} L(w, b) + \eps. \]
\end{theorem}

The above theorem gives the desired agnostic learning guarantee in polynomial time for an arbitrary ReLU activation. In addition, it is a proper learner\footnote{We allow our algorithm to output $\bfzero$, by the simple argument that $\bfzero$ is the $L^2$-limit of $\sigma(\lrangle{x,w}+b)$, as $b\to-\infty$.} i.e., it outputs a ReLU activation function that achieves this loss.  The algorithm fits into the Statistical Query (SQ) framework~\citep{Kearns93-SQ-model}, which gives oracle access to statistics $\EE_{(x,y)\sim \calD}[f(x,y)]$ for some user-specified function $f$, up to an error tolerance of $\tau$ (typically, $\tau=1/\poly(d)$ for polynomial sample complexity). Our algorithm consists of two main steps -- an initial phase that finds a coarse initializer $w_0$ to the true vector $v$, and then an iterative procedure based on a reweighted version of projected gradient descent to get the desired error of $O(\OPT)$.\bnote{new} We further note that the time/sample complexity dependence on $W$ is standard, since the upper bound on the weight's norm controls the ``scale'' of our problem. \anote{ removed: necessary: the upper bound on the weight's norm controls the ``scale'' of our problem. In particular, when we simultaneously scale up the parameter $W$ and the $y$-values of distribution $\calD$ by some factor $\lambda$,  then the sample complexity of estimating population expectations (even as easy as $\EE[xy]$) would also increase by a multiplicative factor of $\lambda$. }

Our algorithm presents an interesting departure from existing approaches for agnostic learning with zero or restricted bias, which uses gradient descent \citep{FreiCG20-agnostic-learning-unbiased-relu-OPT2/3,AwasthiTV-ICLR23-ReLUGD,DiakonikolasKTZ22-agnostic-learning-unbiased-activations-GD} and other algorithms that fit in the framework of Correlation SQ (CSQ), where the algorithm is only allowed to query values of the form $\E_{(x,y) \sim \calD}[y f(x)]$, for some function $f$, up to a tolerance $\tau$. In fact, we can prove the following strong lower bound for all CSQ algorithms. 

\begin{theorem}[CSQ lower bound of $\omega(\OPT)$]\label{thm:intro:CSQ}
    There exists a function $F(\eps)$ that goes to infinity as $\eps \to 0$, such that for any $\eps>0$ and any constant $\alpha \ge 1$, there exists a family of distributions with $\OPT\le \eps/\alpha$, under which any CSQ algorithm that can agnostically learn an arbitrary ReLU neuron with loss at most $\alpha \cdot \OPT + \eps$ (as defined in \cref{eq:intro:formulation}) must use either $2^{d^{\Omega(1)}}$ queries or queries of tolerance $d^{-F(\eps)}$.
\end{theorem}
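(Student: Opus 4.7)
The plan is to reduce agnostic learning on a large family of near-orthogonal biased-ReLU instances to a CSQ weak-recovery problem and invoke the standard CSQ-dimension lower bound, tuning the bias so that the low-frequency Hermite content of the target ReLU is negligible. Concretely, given $\eps>0$ and $\alpha\ge 1$, I would pick $b=b(\eps)<0$ with $|b|=\Theta\bigl(\sqrt{\log(1/\eps)}\bigr)$ so that $\gamma \coloneqq \EE_{z\sim\calN(0,1)}[\sigma(z+b)^2]=4\eps$, and take a packing $\calV\subset\mathbb{S}^{d-1}$ of $M=2^{d^{\Omega(1)}}$ unit vectors with pairwise inner products $|\langle v,v'\rangle|\le \delta = d^{-\Omega(1)}$ (a standard probabilistic or code-based construction). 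For each $v\in\calV$, the realizable instance $\calD_v$ with $x\sim\calN(0,I_d)$ and $y=g_v(x)\coloneqq\sigma(\langle x,v\rangle+b)$ has $\OPT_v=0\le\eps/\alpha$. A direct $L^2$-expansion shows that any $\hat g$ with loss $\le\alpha\,\OPT+\eps=\eps$ satisfies $\|g_v-\hat g\|_{L^2}^2\le\gamma/2$, which after optimizing over $\|\hat g\|$ forces cosine-correlation $\langle g_v,\hat g\rangle/(\|g_v\|\|\hat g\|)\ge 1/\sqrt{2}$; thus CSQ success is equivalent to weak recovery of $g_v$ in $L^2$.

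\textbf{Hermite suppression of the low-frequency part.} Expanding $g_v=\sum_{k\ge 0}c_k(b)\bar H_k(\langle x,v\rangle)$ in orthonormal Hermite polynomials, the bivariate Hermite identity yields $\langle g_v,g_{v'}\rangle=\sum_{k\ge 0}c_k^2\langle v,v'\rangle^k$. The key estimate I would prove is: for $K\coloneqq K(\eps)=c_0|b|^2$ with a small absolute constant $c_0$, $\|g_v^{<K}\|^2=\sum_{k<K}c_k^2=o(\gamma)$ as $|b|\to\infty$. This follows from the integration-by-parts identity $c_k = H_{k-2}(|b|)\phi(|b|)/\sqrt{k!}$ for $k\ge 2$ combined with a Stirling/Poisson tail estimate: the Hermite series concentrates around $k\asymp|b|^2$, so any partial sum up to $c_0|b|^2$ is exponentially smaller than $\gamma$ in $|b|^2$. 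Hence almost all the $L^2$-mass of $g_v$ lives in Hermite degrees $\ge K=\Theta(\log(1/\eps))$, and since Hermite projections are $L^2$-orthogonal, weak recovery of $g_v$ reduces to weak recovery of the high-degree part $g_v^{\ge K}$.

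\textbf{Applying the CSQ-dimension bound.} The family $\{g_v^{\ge K}\}_{v\in\calV}$ has $M=2^{d^{\Omega(1)}}$ members with near-identical norms $(1-o(1))\gamma$ and tiny pairwise correlations
$|\langle g_v^{\ge K},g_{v'}^{\ge K}\rangle| \le \delta^K \|g_v^{\ge K}\|^2 \le d^{-\Omega(K)}\gamma$.
Invoking the standard CSQ-dimension lower bound on this family, any CSQ algorithm weakly recovering $g_v^{\ge K}$ (and hence $g_v$) must use either $2^{d^{\Omega(1)}}$ queries or tolerance $\tau \le O\bigl(\sqrt{\gamma}\cdot d^{-\Omega(K)}\bigr)=d^{-F(\eps)}$, with $F(\eps)=\Theta(K)=\Theta(\log(1/\eps))\to\infty$ as $\eps\to 0$, matching the theorem statement.

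\textbf{Anticipated obstacle.} The technical heart will be the quantitative Hermite coefficient calculation showing $\sum_{k<K}c_k^2=o(\gamma)$ with a clean rate---essentially a Poisson tail bound with rate $|b|^2$. The constants in $c_0$, $\delta$, and the packing size $M$ must be balanced so that the spectral norm of the Gram matrix on $\{g_v^{\ge K}\}$ stays close to $\gamma$ rather than $M\cdot\delta^K\gamma$, which is what converts the pairwise-correlation bound into an actual tolerance lower bound via Chebyshev. Everything else---the $L^2$ weak-recovery reduction and the invocation of the generic CSQ-dimension framework---is standard.
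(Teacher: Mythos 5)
There is a genuine gap, and it is located exactly where your construction differs from what the theorem requires. You take \emph{realizable} instances, $y=g_v(x)=\sigma(\langle x,v\rangle+b)$ with $\OPT=0$, and then argue hardness by applying the CSQ-dimension bound to the truncated family $\{g_v^{\ge K}\}$. But the oracle answers correlational queries against the \emph{full} label function $g_v$, not against its high-degree part, and the low-degree Hermite coefficients of a biased ReLU, while $o(1)$ relative to $\|g_v\|^2$ as $b\to-\infty$, are completely independent of $d$: by \cref{lem:CSQ-explicit-Hermite-coefficient}, $c_1=\Phi(b)=\mathrm{poly}(\eps)$ and $c_0=\varphi(b)-|b|\Phi(b)=\mathrm{poly}(\eps)$. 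Consequently $\EE[yx]=\Phi(b)\,v$ exactly, so a CSQ algorithm can recover the direction $v$ (and hence achieve loss $\le\eps$) using $d$ queries of tolerance $\mathrm{poly}(\eps)/\mathrm{poly}(d)$ --- polynomial tolerance with no dependence on any $F(\eps)$. Equivalently, the pairwise correlations of the \emph{actual} label functions satisfy $\langle g_u,g_v\rangle\ge c_0^2$, a bound independent of $d$, so the best tolerance lower bound your family can certify is $\mathrm{poly}(\eps)$, not $d^{-F(\eps)}$. The step ``weak recovery of $g_v$ reduces to weak recovery of $g_v^{\ge K}$'' silently replaces the data distribution by one the algorithm never sees; the hardness framework only controls queries against the functions that actually label the data. (This is also consistent with the broader picture: realizable learning of a single biased ReLU is CSQ-easy, e.g.\ by gradient descent, so no family of realizable instances can witness the claimed lower bound.)

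The paper's proof fixes precisely this point by moving the truncation into the data distribution: the labels are $y=\tilde g_\eps(\langle x,v\rangle)$, where $\tilde g_\eps$ is $g_\eps$ with its first $t_\eps$ Hermite components removed and $t_\eps$ is chosen (\cref{lem:CSQ-t-eps-unbounded}) so that the removed mass is at most $\eps/\alpha$. These instances are agnostic rather than realizable, but they satisfy $\OPT\le\eps/\alpha$ --- which is all the theorem asks --- because the ReLU $\sigma(\langle x,v\rangle+b_\eps)$ is within $\eps/\alpha$ of the labels; and now the label functions themselves have pairwise correlation at most $d^{-\Omega(t_\eps)}\|\tilde g_\eps\|_\calN^2$, so \cref{lem:CSQ-hard-to-learn-high-CSDA-family} applies directly, using the fact that an $(\alpha\cdot\OPT+\eps)$-learner must beat the zero function on these instances. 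Your remaining ingredients (the near-orthogonal packing, the Hermite coefficient formula, and the estimate that the low-degree mass is $o(\eps)$ for any fixed degree) are the same ones the paper uses, so the proposal can be repaired by relabeling the data with the truncated ReLUs; as written, however, the lower bound does not follow.
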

In other words, no efficient CSQ algorithm can achieve error of $O(\OPT)$, since doing so requires either $\text{exp}(d^{\Omega(1)})$ queries (exponential time) or $d^{-F(\epsilon)} = d^{\omega(1)}$ tolerance (superpolynomial
samples) in the CSQ framework. In particular, since many variants of gradient descent (GD) under the $L^2$ loss are captured by the CSQ model, this points to the sub-optimality of GD for learning even a single neuron, and motivates the design of new hybrid algorithms for learning neural networks. 

Theorems \ref{thm:intro:main} and \ref{thm:intro:CSQ} together identify a new problem on which there is a separation between SQ and CSQ algorithms. \citet{chen2020learning} gave the first such separation by designing an SQ algorithm for PAC learning (the realizable setting) that is fixed parameter tractable\footnote{ i.e. in time $f(k)\poly(d)$, where $f(k)$ can grow very fast with $k$, but $\poly(d)$ is independent of $k$. } for learning a neural network with $k$ neurons under Gaussian marginals, where superpolynomial $d^{\Omega(k)}$ lower bounds for CSQ algorithms were known~\citep{DiakonikolasKKZ20-CSQ-bound-depth2-NN}. Such a separation has also been identified for learning sparse polynomials~\citep{KianiLLJ024-SQCSQ-separation-2, AndoniPV014-SQCSQ-separation-1} and planted multi or single-index models~\citep{DamianLS22-CSQ-hardness-polynomial-of-one-dimension, DamianPLB24-single-index-optimal-SQ-lower-bound}, both under Gaussian marginals. 
Our two-phase algorithm is also inspired by \citet{ChenM-COLT20-FilteredPCA} on learning certain multi-index models using a two-step non-CSQ algorithm. However, these works are all in the realizable setting ($\OPT=0$). The agnostic setting that we study in this work introduces different challenges that require new algorithmic ideas that we describe in more detail in Section~\ref{sec:main-algorithmic-results}. 
Our problem of agnostic learning of a single ReLU neuron is arguably the simplest setting, and the first agnostic setting where such separation has been identified. \anote{Modified:}

{\em Subsequent work:} In very recent independent work, \citet{zarifis2025robustly} gave a different algorithm that applies a smoothing operator for learning neurons, and extended the $O(OPT)$ approximation guarantee to more general monotone Lipschitz activations. 
However, they do not prove the CSQ lower bound and SQ-CSQ separation that we establish in the biased ReLU setting.   


\section{Warm-up and preliminaries}\label{sec:main-warm-up-and-prelim}
In this section, we first introduce some auxiliary results that simplify our problem. Then, we present intuitions for the unique challenges in our regime, and why previous methods fail. 

\subsection{Preliminaries}

Let $L(w, b) = \frac{1}{2}\EE[(y - \sigma(\lrangle{x,w} + b))^2]$ be the loss of a ReLU neuron with weight $w\in\RR^d$ and bias $b\in\RR$. In our proofs, we often compare our loss $L(w,b)$ to $L(v,b)$, where $v$ is the unit vector minimizing this loss for the given $b$. This serves as a proxy for $\OPT$ in the ``normalized'' problem defined below.

We use $\varphi$ and $\Phi$ to denote the standard Gaussian pdf and cdf, respectively, and $\int f(z)\,d\Phi(z)$ denotes integration of $f$ with respect to the one-dimensional standard Gaussian measure. For any vector $w\in \RR^d$, we use $\|w\|$ to denote its $\ell_2$ norm.

\textbf{Normalizing target vector $v$.}
Suppose the best-fit ReLU for $\calD$ is some $\sigma(\lrangle{x,v} + b)$ with $\|v\|\neq 1$. Then, we can instead work with the normalized problem $(x,\hat{y})\sim \hat{\calD}$, where $\hat{y} = y / \|v\|$. Let $\hat{L}$ be the loss function under $\hat{\calD}$. We prove in \cref{appendix:proof-prelims} the following claim: if we \textit{know} the value $\|v\|$, then scaling by $1 / \|v\|$ reduces the problem to unit vector case, with at most polynomial runtime overhead.
\begin{proposition}\label{prop:prelim:scaling-v}
    In the above setting, the optimal ReLU for $\hat{\calD}$ is $\sigma(\lrangle{x, \hat{v}} + \hat{b})$, where $\hat{v} = v / \|v\|$ is a unit vector, $\hat b = b/\|v\|$, and it has loss $\widehat{\OPT} = \hat{L}(\hat v,\hat b) = \OPT / \|v\|^2$. Moreover, suppose parameters $w,b_w$ incur loss $\hat{L}(w, b_w) \leq \alpha\cdot\widehat{\OPT} + \eps$ on the normalized problem $\hat{\calD}$. Then, the pair $(\|v\|w, \|v\|b_w)$ would incur loss $L(\|v\|w, \|v\|b_w) \leq \alpha\cdot \OPT + \|v\|^2\eps$ on the original problem $\calD$.
\end{proposition}

\textbf{``Guessing'' $\|v\|$ and $b$.} 
To apply appropriate scaling, we must have some knowledge about $\|v\|$. To this end, we apply grid search over various guesses $(\beta, \gamma)$ of the values $(\|v\|, b)$, where $v,b$ are the parameters of the optimal ReLU over $\calD$. (See Algorithm \ref{alg:full-alg}.) This approach is effective as long as the approximation factor w.r.t. an ``estimated optimal ReLU'' is at most a constant multiple of that w.r.t. the true $\OPT = L(v,b)$. We formalize this claim in the next proposition, also proven in \cref{appendix:proof-prelims}: 
\begin{proposition}\label{prop:prelims:search-v-b}
    Let $\hat v$ be the unit vector in $v$'s direction. Let $\delta_v = \beta - \|v\|$ and $\delta_b = \gamma - b$ denote the additive errors of our parameter estimations. Then if $|\delta_v|, |\delta_b| \leq 0.1\sqrt{\eps}$, we have:
    \[ \alpha \cdot L(\beta \hat v, \gamma) \leq O(\alpha)\cdot L(v, b) + O(\eps). \]
\end{proposition}

The time/sample complexity of this grid search will be analyzed in \cref{thm:complete-alg-result} in the appendix. For now, we assume the problem is normalized, the optimal ReLU has $\|v\| = 1$, and $b$ is known. This is sufficient to bound the loss $L(w, b)$, as long as $L(v,b)$ is a good proxy for $\OPT$.




\subsection{Challenges with significant negative bias}

In all previous works where $b$ is not too negative, the optimal ReLU $\sigma(\lrangle{x,v}+b)$ is \textit{linear} on a considerable portion of inputs. Intuitively, this shape should be easier to learn than a small wedge at the far end of the number line, which happens when $b$ is very negative. Our results corroborate this intuition: we can apply previous algorithms when $b$ is bounded below, but as $b\to -\infty$, the CSQ hardness (\cref{thm:intro:CSQ}) takes effect and a specialized algorithm for this limit is needed. 

\textbf{Structural observations.} The $b\to -\infty$ regime comes with a benefit: we can now apply asymptotic analysis and make claims that hold \textit{when the optimal ReLU has sufficiently negative $b$}. Most importantly, we have the following two lemmas, which are proven in \cref{appendix:proof-prelims}: 

\begin{lemma}\label{lem:upper-bounding-OPT}
    Suppose $\alpha \geq 3$. Let $\sigma(\lrangle{x,v}+b)$ be the optimal ReLU with loss $\OPT$, where $\|v\|=1$ and $b$ sufficiently negative. If the zero function incurs loss at least $\alpha\cdot\OPT + \eps$, then we have
    \[ \OPT < \frac{3\Phi(b)}{\alpha b^2}\text{, and }\eps < \frac{3\Phi(b)}{b^2}. \]
\end{lemma}


The intuition behind the two lemmas is the following: if the optimal ReLU has very negative $b$, then the zero function should be a reasonable approximation. Thus, any non-trivial setting (where $\bfzero$ is not good enough) would have $\OPT$ and $\eps$ both be extremely small.



\textbf{An Analysis of (projected) gradient descent.} 
Our new algorithm is motivated by the need to overcome the failure of the following gradient descent (GD) algorithm, as $b\to -\infty$:
\[ (w_{t+1}, b_{t+1}) \gets (w_t, b_t) - \eta \nabla_{w,b} L(w_t, b_t), \text{ for }t=1,\ldots, T. \]

Consider the change in \textit{direction} of $w_{t+1}$, compared to that of $w_t$. At iteration $t$, $w_t$ updates in the following direction: 
\[ -\nabla_w L(w_t,b_t) = \EE[(y - \sigma(\lrangle{x,w_t}+b))\cdot \indicator{\lrangle{x,w_t}\geq -b_t} \cdot x], \]
where the indicator is due to the derivative of the ReLU function: $\sigma'(z) = \indicator{z\geq 0}$. 

Let $v^\perp$ be the component of $v$ that is perpendicular to $w_t$.\footnote{When $w_t$ is a unit vector, we have $v^\perp = \frac{v - \lrangle{v,w_t}w_t}{\|v - \lrangle{v,w_t}w_t\|}$.} 
Observe that, for $w_t$'s direction to approach that of $v$, the update $-\nabla_w L(w_t,b_t)$ must have a \textit{positive component} on $v^\perp$:
\begin{align}
    \big \langle-\nabla_w L(w_t,b_t), v^\perp \big\rangle 
    & = \EE\big[ \big( y - \sigma(\lrangle{x,w_t}+b_t) \big) \cdot \indicator{\lrangle{x,w_t}\geq |b_t|} \cdot \langle x, v^\perp \rangle \big] \nonumber\\
    & = \EE[ y \cdot  \indicator{\lrangle{x,w_t}\geq |b_t|}\cdot \langle x, v^\perp \rangle  ] > 0. \label{eq:intro:GD-making-progress}
\end{align}

This can be viewed as a \textit{conditioned} correlation between $y$ and $\langle x, v^\perp \rangle$, where we call $\indicator{\lrangle{x,w_t} \geq |b_t|}$ the \textit{condition function}. We can then write $y = \sigma(\lrangle{x,v}+b) - (\sigma(\lrangle{x,v}+b) - y)$, where the first term comes from ReLU and the second from noise. Through these lens, GD makes progress when ReLU (conditionally) correlates with $\langle x, v^\perp \rangle$ more than noise does. 




\textbf{One-dimensional adversarial example.} We now give a simple one-dimensional example to show how the correlation between $\langle x, v^\perp \rangle$ and $y$ can point to the \textit{opposite direction}, when $b$ is sufficiently negative. Fix an $\alpha$, let $\OPT = \Phi(b) / 100\alpha b^2$ such that $\bfzero$ is not an $\alpha$-approximation. Consider the following distribution which has loss exactly $\OPT$, illustrated in \cref{fig:intro:1D-GD-bad-example}:
\begin{equation}\label{eq:intro:1D-GD-bad-example}
    y = \begin{cases}
    \sigma(x + b) & \text{ when } x \geq 0, \\
    \sqrt{2\OPT} & \text{ when } x < 0.
\end{cases}
\end{equation} 

\begin{figure}[h]
\centering
\subfigure[Moderate bias.]{\includegraphics[width=0.4\textwidth]{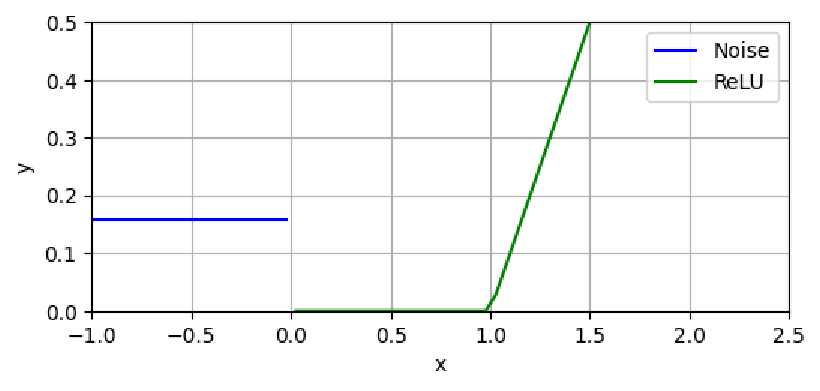}}
\subfigure[Larger bias.]{\includegraphics[width=0.4\textwidth]{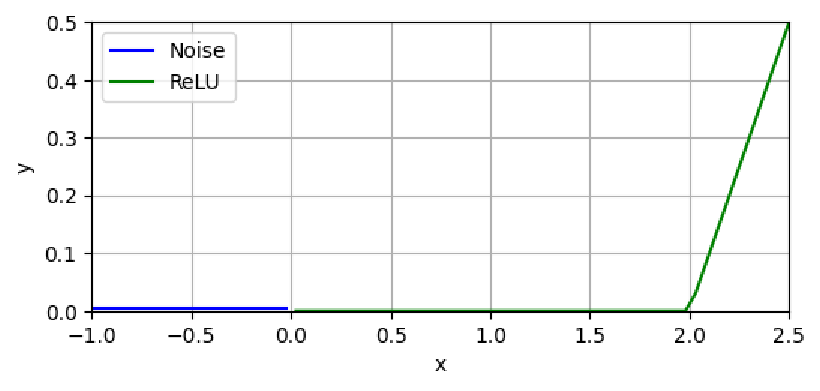}}
    \caption{One-dimensional bad example in \cref{eq:intro:1D-GD-bad-example}. The noise (blue) decreases exponentially in $|b|$. }\label{fig:intro:1D-GD-bad-example}
\end{figure}

Two seemingly contradictory facts are true about this distribution:
\begin{enumerate}
    \item The ReLU produces more \textit{loss} than noise: the zero function incurs about $\frac{\Phi(b)}{b^2} \approx 100\alpha\cdot \OPT$ loss when $x \geq 0$, by not fitting the ReLU. 
    \item The noise dominates the \textit{correlation} with $x$, pulling $\EE[xy]$ towards the negative direction:
    \[ \begin{cases}
        \EE[\sigma(x+b)\cdot x] &= \Phi(b), \\
        \EE[(y - \sigma(x+b))\cdot x] &= -\Omega(\sqrt{\OPT}) = -\Tilde{\Omega}\big(\sqrt{\Phi(b)}\big).
    \end{cases} \]
\end{enumerate}

An intuitive explanation is that the loss scales \textit{quadratically} with $y$, yet the correlation scales (roughly) linearly. This issue appears when $y$ is very close to zero, possible only as $b\to -\infty$.
 
\textbf{Higher dimensions.}
Returning to high-dimensional ReLU regression, we demonstrate how the aforementioned adversarial example can occur in a way that harms GD, per the analysis in \cref{eq:intro:GD-making-progress}. 

Suppose we have some estimated weight $w_t$ and bias $b_t$. Consider the plane spanned by $w_t$ and $v^\perp$ in \cref{fig:intro:gradient-when-bias-small-vs-large}. By removing the uncolored region through the condition function, the direction of correlation in \cref{eq:intro:GD-making-progress} essentially depends on the thin green strip of width $\frac{1}{|b_t|^{0.99}}$, due to Gaussian decay. 

\begin{figure}[h]
\centering
\subfigure[Moderate bias.]{\includegraphics[width=0.35\textwidth]{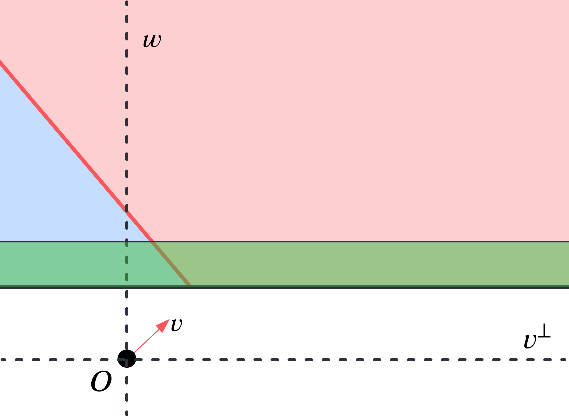}}
\subfigure[Larger bias.]{\includegraphics[width=0.35\textwidth]{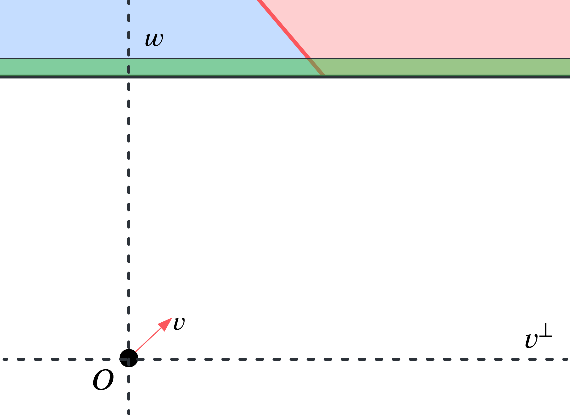}}
    \caption{Plan view of the $w_t$-$v^\perp$ plane in high dimensional analysis of GD. The ReLU is positive on the red region.
    $1-o(1)$ probability mass of the colored region falls in the green strip. }\label{fig:intro:gradient-when-bias-small-vs-large}
\end{figure}

We thus identify a reason for the failure of GD
: the red region is too biased in the green strip, so ReLU may fail to dominate the correlation (\cref{eq:intro:GD-making-progress}), and GD may update in the wrong direction.\footnote{Once exception is when $b_t \leq \frac{b}{\lrangle{w_t, v}}$. But then $b_t$ is too negative and other problems start to occur. } We note that this is strong evidence that \textit{any} GD variant would fail at this task, even those not captured by the CSQ model (\cref{thm:CSQ}). 

\section{Algorithmic results}\label{sec:main-algorithmic-results}
\begin{Algorithm}[H]
    \centering
    \par
    \noindent\fbox{
    \parbox{0.969\textwidth}{%
        \textbf{Input}: Accuracy parameter $\eps$; sample access to $\calD$; norm upper bound $W$. 
        \begin{enumerate}
            \item Let $h_0 = \bfzero$ be the zero function, and let $L_0 = \EE_\calD[y^2]$ be its loss. 
            \item Run a linear regression algorithm with bias, in the domain $b\geq O\big(\sqrt{\log (1/\eps)}\big)$, and obtain parameters $w_{\mathrm{lin}}\in \RR^d, b_{\mathrm{lin}}$. Let $h_1$ be the ReLU neuron $x\mapsto \sigma(\lrangle{x, w_{\mathrm{lin}}} + b_{\mathrm{lin}})$. 
            \item Run the GD algorithm from \cite{AwasthiTV-ICLR23-ReLUGD} with accuracy $\eps$, and obtain ReLU neuron $h_2$ with loss $L_2$. This is competitive against the best \textit{moderately-biased} ReLU. 
            \item Run grid search over parameter space $[0,W]\times [-\Theta\big(\sqrt{\log(1/\eps)}\big), \Theta\big(\sqrt{\log(1/\eps)}\big)]$ with accuracy $0.1\sqrt{\eps}$, where each pair of parameters $\{(\beta_i, \gamma_i):i\in 3,\ldots,N\}$ indicate ``guesses" for $(\|v\|, b)$. 
            
            \vspace{0.2em}
            For each pair $(\beta_i, \gamma_i)$, we apply the following two subroutines on the normalized distribution $\hat \calD$ (\cref{prop:prelim:scaling-v}), and obtain ReLU $h_i$ with loss 
            \[ L_{i} \leq 0.1\alpha\cdot \OPT_{i} + 0.1 \frac{\eps}{\beta_i^2}, \]
            where $\OPT_{i}$ is the minimum loss among ReLUs $\sigma(\lrangle{x,v}+b)$ with $\|v\| = \beta_i$ and $b = \gamma_i$. 

            \begin{enumerate}
                \item \textit{Thresholded PCA}: draw $m=\mathrm{poly}(\log d, \log 1/\delta, 1/\eps)$ samples $S = \{(x_i, y_i)\}_{i=1}^m$. Output $w_0$, the top eigenvector of the following matrix:
                \[ \EE_{(x,y)\sim S}\left[ x x^\top \cdot \indicator{|y| \geq \frac{1}{|b|}} \right]. \]

                \item \textit{Reweighted PGD}: an iterative GD-like algorithm with initialization $w_0$ and step size $\eta$ being a fixed polynomial of $\eps$, for $T = \poly(1/\eps)$ steps. On each iteration $t\in[T]$, we draw \bnote{double-check} $m=\mathrm{poly}(d, 1/\delta, 1/\eps)$ samples and truncate their $\|x\|$ and $y$ (see \cref{appendix:PGD-making-progress}) if necessary, resulting in training samples $S=\{(x_i, y_i)\}_{i=1}^m$. 
                
                Then, we update $w_t$ by $w_{t+1} \gets \frac{w_t + \eta v_\mathrm{update}}{\|w_t + \eta v_\mathrm{update}\|}$, where
                \[ v_\mathrm{update} = \big(I_d - w_t w_t^\top\big) \EE_{(x,y)\sim S}\left[f_{t}(x)\cdot y \cdot x \cdot \indicator{\lrangle{x,w_t} \geq \Tilde{b}} \right]. \]
                Here, $f_t(x)=\exp\left(\rho|b|\lrangle{x,w_t} - \frac{\rho^2b^2}{2}\right)$, and $\Tilde{b}=(\rho+\lambda-\lambda^2\rho)|b|$. Both choices will be explained and the parameters $\lambda,\rho$ specified during analysis in \cref{subsec:main-reweighted-PGD}. 

                We will set the final output $h_i$ to be the ReLU neuron $x\mapsto \sigma(\lrangle{x,w_t} + b)$ with the smallest loss, for some $t\in [T]$. 
            \end{enumerate}
            
        \end{enumerate}
        \textbf{Output}: $h_{\mathrm{ind}}$, where $\mathrm{ind} = \arg\min_i\{L_i: i\in [N]\}$. 
    }
    }
    \vspace{0.3em}
    \par
    \caption{Complete algorithm, assuming population access to $\calD$.}\label{alg:full-alg}
\end{Algorithm}

In this section, we present our full algorithm (Algorithm \ref{alg:full-alg}), which in polynomial time outputs a ReLU neuron with at most $\alpha\cdot\OPT + \eps$ loss, as defined in \cref{eq:intro:formulation}. We note that our algorithm fits in the SQ model, and for the sake of exposition, the description of Algorithm \ref{alg:full-alg} assumes access to population expectations over $\calD$. We account for finite-sample errors in the appendix. 

Algorithm \ref{alg:full-alg} is a two-stage process consisting of \textit{thresholded principal component analysis (PCA)} and \textit{reweighted projected gradient descent (PGD)}.\footnote{
This is also known as ``Riemannian GD'' in literature, as the unit sphere is a Riemannian manifold. } We will analyze reweighted PGD before thresholded PCA, as the former is intimately related to our observations above. Our algorithm is reminiscent of the two-stage algorithm used in \citet{ChenM-COLT20-FilteredPCA} for polynomials in few relevant directions, but we note that we work under the more challenging agnostic setting, and we use a complete different iterative update algorithm. 

All our formal lemmas and theorems are under the following assumption: 
\begin{assumption}\label{assumption:standard}
Let $\alpha$ be a sufficiently large constant. Let $\calD$ be the distribution of $(x,y)$ over $\RR^d\times \RR$, where the $x$-marginal is standard Gaussian. Let $\OPT$ be the loss of the best-fit ReLU $\sigma(\lrangle{x,v}+b)$ with $\|v\|=1$, and we assume the zero function $\bfzero$ has loss at least $\alpha\cdot \OPT + \eps$. 
\end{assumption}

As described in \cref{sec:main-warm-up-and-prelim}, the assumption $\|v\|=1$ is essentially without loss of generality. In the next two subsections we present main theorems for the two subroutines. The effectiveness of the complete Algorithm \ref{alg:full-alg} is a natural consequence of the two and is proven in \cref{sec:appendix-putting-things-together}. 



\subsection{Reweighted PGD}\label{subsec:main-reweighted-PGD}

Recall the example in \cref{fig:intro:gradient-when-bias-small-vs-large}, that ``correlation'' between $\lrangle{x, v^\perp}$ and $y$ can be dominated by noise instead of the optimal ReLU, hence GD updates in the wrong direction. To overcome this, when calculating the ``gradient'' on each iteration, we will slightly modify the \textit{condition function} from $\mathbbm{1}{\lrangle{x,w_t}+b\geq 0}$ to be $\mathbbm{1}\{\lrangle{x, w_t} \geq \Tilde{b}\}$ for some carefully chosen $\Tilde{b}$. We also add a \textit{weight function} $f_t(x)$ inside the expectation, defined as the change-of-measure function between two Gaussians. Specifically, it shifts the $x$-marginal to be centered around $\rho \cdot |b|\cdot w_t$, for some carefully chosen $\rho\in (0, 1)$. The exact expression of $f_t(x)$ and $\Tilde{b}$ are specified in Algorithm \ref{alg:full-alg}. The parameters $\lambda$ and $\rho$ are constants to be determined during the analysis. 


The insight behind reweighting\footnote{Reweighting is conceptually similar to \textit{rejection sampling}, a method used for learning general linear halfspaces in \citet{DiakonikolasKS18a-biased-halfspace-nasty-noise, DiakonikolasKLZ24-testable-learning-biased-halfspace}. Compared to their approaches, we have to shift the Gaussian mean by a more carefully chosen amount as well as apply a conditioning indicator, due to the nature of our regression problem. }
is the following: suppose $w_t$ is close to $v$ in direction, then shifting the Gaussian center along $w_t$ would essentially \textit{reduce the problem back to the moderate bias} case. This is clear from \cref{fig:intro:gradient-when-bias-small-vs-large}, where shifting the origin upwards in subfigure (b) would result in a similar landscape as subfigure (a). 
We note that this needs $w_t$ to be mostly aligned with $v$ in the first place, hence the necessity of thresholded PCA. 

Below is our main theorem for reweighted PGD. 

\begin{theorem}\label{thm:PGD}
    Suppose \cref{assumption:standard} holds with $b$ sufficiently negative, and suppose the initialization $w_0$ has $\lrangle{w_0, v} \geq 0.9$. Then, reweighted PGD with $\rho = 0.5, \lambda = 0.9$, and $\eta$ being a fixed polynomial of $\eps$ can with probability $1-\delta$ output a unit vector $w$ such that:
    $$ L(\lrangle{x,w} + b) \leq \alpha\cdot \OPT + \eps, $$
    using $\poly(d, 1/\eps, 1/\delta)$ time and samples per iteration, and in $\log (1/\eps)$ iterations. 
\end{theorem}

We prove that reweighting and conditioning allows the ReLU to overcome the noise and guide the algorithm to the correct direction. Contributions from ReLU and noise are measured by the vectors:
\[ \begin{cases}
    v_{\mathrm{relu}} & = \big(I_d - w_t w_t^\top\big) \EE \left[f_t (x)\cdot x \cdot \sigma(\lrangle{x,v}+b) \cdot \mathbbm{1}\{ x_w \geq \Tilde{b} \} \right],\text{ and } \\ 
    v_{\mathrm{noise}} & = \big(I_d - w_t w_t^\top\big) \EE \left[f_t (x)\cdot x \cdot \big(y - \sigma(\lrangle{x,v}+b)\big) \cdot \mathbbm{1}\{ x_w \geq \Tilde{b} \} \right].
\end{cases} \]

We also prove in \cref{appendix:proof-lem-lower-bound-w-v} the following, that closeness of direction implies small loss:
\begin{lemma}\label{lem:lower-bound-w-v}
    Suppose \cref{assumption:standard} holds. For any unit vector $w$, if $L(w, b) > \alpha\cdot \OPT + \eps$, then for all sufficiently negative $b$:
    \[ \|w-v\|^2 \geq \Omega \left( \frac{\alpha\cdot \OPT + \eps}{\Phi(b)} \right). \]
\end{lemma}
\subsubsection{Contribution from the optimal ReLU}
Let $x_w = \lrangle{x, w}$ and $x^\perp = \langle x, v^\perp \rangle$, which simplifies the notation in the following proof. We will focus on the plane spanned by $w$ and $v^\perp$, since the other directions are all irrelevant to $v_{\mathrm{relu}}$. \bnote{added this and the first equality below.} In fact, by coordinate-wise independence of $\calN(0, I_d)$, the direction of $v_{\mathrm{relu}}$ completely aligns with $v^\perp$.  
\begin{lemma}\label{lem:main-PGD-lower-bound-ReLU}
    Suppose \cref{assumption:standard} holds. Let $\rho,\lambda \in (0,1)$ be constants with $\rho\lambda \geq \frac{1}{2}$. If on iteration $t$ we have $\lrangle{w_t,v}\geq \lambda$ and $L(w_t, b) > \alpha\cdot \OPT$, then for all sufficiently negative $b$, we have:
    \begin{align}
        \|v_{\mathrm{relu}}\| = \lrangle{v_\mathrm{relu}, v^\perp} = \Omega\big(\sqrt{\alpha \cdot \OPT + \eps} \big) \cdot \frac{ e^{- \frac{b^2}{2} \big((1-\lambda \rho)^2 - \frac{1}{2}\big) }}{|b|^{3/2}}. \label{eq:PGD-realizable-contribution}
    \end{align}
\end{lemma}
\begin{proof}
    The first equality follows from the coordinate-wise independence of standard Gaussian. 
    
    Note that $f_t(x)$ is really a function of $x_w$, so we write it as $f_t(x_w)$ in this proof. Using the coordinate system of $w_t$ and $v^\perp$, we can write the desired expression as:
    \begin{align*}
        & \EE \big[ f_t(x) \cdot x^\perp \cdot \sigma(\lrangle{x,v} + b)\indicator{x_w \geq (\rho + \lambda - \lambda^2\rho)|b|} \big] \\ 
        & = \int_{(\rho + \lambda - \lambda^2\rho) |b|}^\infty f_t(x_w) \int_{-\infty}^\infty x^\perp \cdot \sigma(\lrangle{x,v}+b) \, d\Phi(x^\perp)\,d\Phi(x_w) \\
        & \geq \int_{(\rho + \lambda - \lambda^2\rho) |b|}^\infty f_t(x_w) \int_{0}^\infty x^\perp \cdot \sqrt{1 - \lrangle{w_t,v}^2} \cdot \sigma\Big(x^\perp - \sqrt{1-\lambda^2}(1-\lambda\rho)|b| \Big) \, d\Phi(x^\perp)\,d\Phi(x_w) \\
        & = \sqrt{1 - \lrangle{w_t,v}^2} \int_{(\lambda - \lambda^2\rho) |b|}^\infty \Phi\Big( \sqrt{1-\lambda^2}(1-\lambda\rho) b\Big) \,d\Phi(x_w) \\
        & = \sqrt{1 - \lrangle{w_t,v}^2} \cdot \Phi\Big( \sqrt{1-\lambda^2}(1-\lambda\rho) b\Big)\cdot \Phi \big( \lambda(1-\lambda\rho)b \big) \\
        & = \Omega(\|w-v\|) \cdot \frac{\varphi \Big( \sqrt{1-\lambda^2}(1-\lambda\rho) b\Big) \cdot \varphi \big( \lambda(1-\lambda\rho)b \big)}{\sqrt{1-\lambda^2} \lambda(1-\lambda\rho)^2 b^2} .
    \end{align*}
    Here, the inequality is due to \cref{lem:technical-relizable-update} in \cref{sec:appendix-reweighted-PGD}, and the final estimation is by Mills' ratio in \cref{appendix:mills-ratio-and-related-lemmas}. All other equalities are calculation of Gaussian integrals. 
    
    Using $\varphi(a) \cdot \varphi(b) = \Theta\big( e^{-\frac{a^2+b^2}{2}} \big)$, and plugging in \cref{lem:lower-bound-w-v}, for any fixed $\lambda,\rho$ we have:
    \begin{align*}
        \lrangle{v_\mathrm{relu}, v^\perp} = \Omega(1)\cdot \sqrt{\frac{\alpha\cdot \OPT + \eps}{\Phi(b)}} \cdot \frac{e^{- \frac{b^2}{2} (1-\lambda \rho)^2 }}{ b^2} = \Omega\big(\sqrt{\alpha \cdot \OPT + \eps} \big) \cdot \frac{ e^{- \frac{b^2}{2} \big((1-\lambda \rho)^2 - \frac{1}{2}\big) }}{|b|^{3/2}}
    \end{align*}
\end{proof}

\subsubsection{Bounding noise and showing progress}
In \cref{appendix:proof-lem-PGD-contribution-noise}, we similarly upper bound the contribution from label noise $y - \sigma(\lrangle{x,v}+b)$:
\begin{lemma}\label{lem:PGD-contribution-noise}
    Suppose \cref{assumption:standard} holds. Let $\lambda,\rho \in (0,1)$ be constants. For all unit vector $u$ such that $u\perp w$, and for all sufficiently negative $b$, we have:
    \begin{align}
        \lrangle{v_\mathrm{noise}, u} & = O(\sqrt{\OPT})\cdot \frac{e^{-\frac{b^2}{2} \big( -\rho^2 + \frac{1}{2}(\lambda + \rho \lambda^2 - \rho)^2 \big)}}{|b|^{1/2}}.\label{eq:PGD-noise-contribution}
    \end{align}
\end{lemma}

From these two lemmas, we can solve for the appropriate values of $\lambda$ and $\rho$, which allows 
$v_\mathrm{relu}$ to dominate $v_\mathrm{update}$, which indicates the direction of $v$. This proof is deferred to \cref{appendix:proof-lem-choosing-lambda-rho}.
\begin{lemma}\label{lem:choosing-lambda-rho}
    Suppose \cref{assumption:standard} holds where $b$ is sufficiently negative, and suppose $L(w_t, b) > \alpha\cdot \OPT$. Set $\lambda = 0.9$ and $\rho \in (0.3, 0.6)$. If $\lrangle{w_t, v} \geq \lambda$, the for all $u\perp w$ we have:
    \[ \frac{\abs{\lrangle{v_\mathrm{noise}, v^\perp}}}{\lrangle{v_\mathrm{relu}, v^\perp}} = e^{-\Omega(b^2)}. \]
\end{lemma}
Finally, we account for sampling error and finish the proof via the following lemma in \cref{appendix:PGD-making-progress}:
\begin{lemma}\label{lem:PGD:making-progress}
    Suppose \cref{assumption:standard} holds where $b$ is sufficiently negative, and suppose $L(w_t, b) > \alpha\cdot\OPT + \eps$ at some iteration $t$. Then, after an iteration of reweighted PGD with $\lambda = 0.9, \rho=0.5$, and $\eta = c_\eta \frac{\|w_t-v\|}{\|v_\mathrm{update}\|}$ for some $c_\eta \leq 0.1$, then:\bnote{double-check.}
    \[ \|w_{t+1} - v\|^2 \leq \big(1 - \Omega(c_\eta) \big) \|w_t - v\|^2. \]
\end{lemma}


A few extra comments about results in this subsection: 
\begin{enumerate}
    \item \cref{eq:solving-lambda-rho} does not have solution when $\lambda < \sqrt{\frac{1}{7} + \frac{2\sqrt{2}}{7}} \approx 0.74$, meaning that a warm start is necessary. This also agrees with the CSQ hardness, since reweighted PGD is a CSQ algorithm.  
    \item We also need $\rho > 0$ for the equation to be feasible, so reweighting is necessary. 
    \item It's not trivial how far we should shift the Gaussian mean along $w$. In particular, taking $\rho = 1$, then \cref{eq:solving-lambda-rho} is true only when $\lambda >1$, an infeasible solution. 
\end{enumerate}

\subsection{Thresholded PCA}\label{subsec:main-thresholded-PCA}
The final piece of our algorithmic result is to give a warm start via \textit{thresholding on $y$}. Our algorithm uses threshold $\tau = \frac{1}{|b|}$ and estimate direction $v$ via the top eigenvector of the following matrix:
\[ M = \EE[xx^\top\indicator{|y|\geq\tau}]. \]
The value $\tau = \Theta(1 / |b|)$ is the smallest threshold that reduces the noise to the necessary level (per \cref{lem:PCA-lem-1} and \cref{lem:PCA-lem-3}). We note that thresholding is related to \textit{trimmed/filtered PCA}~\citep{ChenM-COLT20-FilteredPCA, chen2020learning}, which uses a different matrix and works under the realizable setting. In our agnostic setting, the top eigenvector does not necessarily align with $v$, hence thresholded PCA only gives a coarse estimation and requires a different analysis. In this section we will show:
\begin{theorem}\label{thm:filteredPCA-main-thm}
    Suppose \cref{assumption:standard} holds. If bias $b$ satisfies $b \leq -\sqrt{\alpha / \log \alpha}$, then thresholded PCA with $\tau = \frac{1}{|b|}$ can with high probability find a unit vector $w$ such that:
    $$ \big| \lrangle{w, v} \big| \geq 1 - O\left( \frac{\log \alpha}{\alpha}\right), $$
    using $\poly(\log d, 1/\eps, \log 1/\delta)$ time and samples. 
\end{theorem}

Now we will present the proof, with details deferred to \cref{sec:appendix-filtered-PCA}. Imagine an adversary trying to perturb the top eigenvector of $M$. They can only do this in two ways:
\begin{enumerate}
    \item Generating noise in the (otherwise flat) region of the ReLU where $\lrangle{x,v}+b < 0$, increasing $M$'s magnitude in some direction $u\perp v$;
    \item Suppressing some of the $y$-value when $x$ has a high $v$-component, so the magnitude of $M$ along $v$ decreases.  
\end{enumerate}

We use three lemmas to show that these actions have limited effects when $\tau = \Theta(1 / |b|)$. For convenience, we define $M_0 = \EE_{(x,y)\sim \calD} \big[xx^\top \indicator{\abs{y} \geq \tau, \lrangle{v,x} + b < 0}\big]$, and $M_1 = M - M_0$. 
\subsubsection{Key lemmas}
First, we show that the ``flat'' region ($M_0$) contributes very little magnitude in any direction:
\begin{lemma}\label{lem:PCA-lem-1}
    Suppose \cref{assumption:standard} holds. For all sufficiently negative $b$, we have:
    \[ \|M_0\|_{\mathrm{op}} = O\lrparenth{\frac{\log \alpha}{\alpha}b^2 \Phi(b)}. \]
\end{lemma}
\begin{proof}
    Since the target ReLU is zero on the region $\{v^\top x + b < 0\}$, by Markov's inequality:
    $$ p := \PP [|y| \geq \tau, \lrangle{x, v} + b < 0] \leq \frac{\EE[y^2\indicator{|y|\geq \tau}]}{\tau^2} \leq \frac{\OPT}{\tau^2}. $$
    
    By \cref{lem:upper-bounding-OPT}, we have $\OPT = O\left( \frac{\Phi(b)}{\alpha b^2} \right)$. Plugging in the value of $\tau$, this means $p = O\lrparenth{ \frac{\Phi(b)}{\alpha}}$. For sufficiently negative $b$, $p$ would be small enough for us to apply Fact \ref{lem:bounding-x-squared-unlikely-event}. For all unit $u\in\mathbb{R}^d$,
    \begin{align*}
        u^\top M_0 u & = \EE[ \lrangle{x,u}^2 \indicator{|y|\geq \tau, \lrangle{x,w} + b < 0} ] \\
        & = O\Big( p\log \frac{1}{p} \Big) = O\lrparenth{\frac{\log \alpha}{\alpha}b^2 \Phi(b)}.
    \end{align*}
\end{proof}
Then, by the coordinate-wise independence property of standard Gaussian, we prove the following. 
\begin{lemma}\label{lem:PCA-lem-2}
    For all sufficiently negative $b$, and for any unit vector $u \perp v$, we have $u^\top M_0 u \leq \Phi(b)$. 
\end{lemma}

The last lemma shows that that the ReLU always has substantial contribution to $M$. The main observation is that, since $\OPT$ only has an $\frac{1}{\alpha}$-fraction of the ReLU's squared $L^2$ norm, with an $\OPT$ budget the adversary can only remove a small fraction of the ReLU's contribution from the calculation of $M$. The proofs of \cref{lem:PCA-lem-2} and \cref{lem:PCA-lem-3} can be found in \cref{sec:appendix-filtered-PCA}. 
\begin{lemma}\label{lem:PCA-lem-3}
    Suppose \cref{assumption:standard} holds. For all sufficiently negative $b$, $v^\top M_1 v = \Omega\big( b^2 \Phi(b)\big)$. 
\end{lemma}



\section{Lower Bound for CSQ algorithms}\label{sec:main-hardness-results}
Our correlational statistical query (CSQ) hardness result follows an established template via a family $\mathcal G$ of functions with small \textit{pairwise correlation}~\citep{FeldmanGRVX13-planted-clique-CSQ-hardness-framework}. Our first step is to identify an appropriate family of functions $\calG$. 
Essentially, identifying one function from $\calG$ is hard under the CSQ model, and 
our goal is to reduce this task to agnostic learning a ReLU neuron. Notably, this proof only goes through for very negative $b$, as mentioned in \cref{sec:main-warm-up-and-prelim}. (More details in \cref{sec:appendix-hardness-results}.)  

Let $\|\cdot\|_\calN$ and $\lrangle{\cdot, \cdot}_\calN$ denote the $L^2$ norm and inner product with respect to the 1-dimensional standard Gaussian. Let $H_k$ be the $k$th (unnormalized) Hermite polynomial. The key lemma for our proof is the following:
\begin{lemma}\label{lem:CSQ-t-eps-unbounded}
    The following holds for all sufficiently small $\eps$. Let $g_\eps(x) = \sigma(x - b_\eps)$, where $b_\eps$ is chosen so that $\|g_\eps\|_\calN^2 = 3\eps$. 
    Let integer $t_\eps \in \NN$ be:
    \[ t_\eps := \max\left\{t\in\NN: \sum_{k=0}^t \lrangle{g_\eps, \frac{H_k}{\sqrt{k!}}}_\calN^2 \leq  
    \frac{\eps}{\alpha} \right\}, \]
    then, we have $t_\eps \to \infty$ as $\eps \to 0$. 
\end{lemma}

In words, as $b\to -\infty$, we can remove more and more lower-order Hermite components from the function $\sigma(x-b)$, while still having a constant fraction of its $L^2$ norm preserved. 

The fact that $g_\eps$ has \textit{no correlation} with lower-order Hermite polynomials allows us to construct the following family of functions with small pairwise correlation:
\begin{lemma}
    Let $\tilde{g}_\eps(x)$ be the functions $g_\eps(x)$ with its first $t_\eps$ Hermite components removed. Let $S$ be a set of vectors with $|\lrangle{u,v}| = O(d^{-\Omega(1)})$ for distinct $u,v\in S$, and $|S| = 2^{\Omega(d^c)}$. Consider the following family of functions:
    \[ \calG = \{g_\eps(\lrangle{x,v}):v\in S\}, \]
    then $\calG$ has low pairwise correlation. Specifically, for any distinct $u, v\in S$, we have:
    \[\begin{cases}
        \EE_{x\sim \calN(0, I_d)} [g_\eps(\lrangle{x,u})\cdot g_\eps(\lrangle{x,v})] \leq d^{-\Omega(t_\eps)}\cdot 3\eps,\\
        \EE_{x\sim \calN(0, I_d)} [g_\eps(\lrangle{x,v})^2] = 3\eps.
    \end{cases}\]
\end{lemma}

The low pairwise correlation and the large cardinality $|\calG|$ translates into high CSQ dimension, which leads to our desired conclusion: any CSQ algorithm that learns $g_\eps(\lrangle{x,v}) \in \calG$ to a non-trivial squared loss (better than the zero function) would make $2^{d^{\Omega(1)}}$ queries\footnote{In this context, we can make correlational queries in the form $\EE[yf(x)]$, where $x\sim\calN(0, I_d)$ and $y$ is labeled by some $g_\eps(\lrangle{x,v})$. }, or queries of tolerance $d^{-\Omega(t_\eps)}$. 

Finally, an agnostic learner for ReLU with $\alpha\cdot\OPT + \eps$ error can indeed learn any $g_\eps(\lrangle{x,v})$ up to nontrivial squared loss. Note that $\|g_\eps\|_\calN^2 = 3\eps$: the zero function incurs squared loss of $\|\tilde{g}_{t_\eps}\|_\calN^2 \geq (3 - \frac{1}{\alpha})\eps$, while the ReLU function $\sigma(\lrangle{x,v}+b_\eps)$ has squared loss $\|g_\eps - \tilde{g}_\eps\|_\calN^2 = \frac{\eps}{\alpha}$. 


\acks{We thank Alex Tang and Santosh Vempala for helpful discussions. \bnote{Add?}
Both authors were supported by the National Science Foundation under Grant Nos. ECCS-2216970 and CCF2154100, and also acknowledge the partial support of Adobe Research.
}

\bibliographystyle{plainnat}
\bibliography{ref}

\newpage
\appendix

\crefalias{section}{appendix}
\section{Related works}\label{sec:appendix-related-works}
Here we give a more detailed survey of some of the most relevant results in algorithmic learning theory, relating to either \textit{arbitrary bias} or \textit{ReLU regression}. 

\textbf{Biased linear models.} 
Several works explicitly considered learning halfspaces with arbitrary bias, or \textit{general halfspaces}. \citet{DiakonikolasKS18a-biased-halfspace-nasty-noise} considers learning general halfspaces under the more general \textit{nasty noise} model. They achieve an optimal $O(\eps)$ error rate, which translates into constant-factor approximation for agnostic learning. Later, \citet{DiakonikolasKTZ22-biased-halfspace-online-GD} gave a faster constant-factor approximation in the agnostic setting. More recently, \citet{DiakonikolasKLZ24-testable-learning-biased-halfspace} proposed an $\Tilde{O}(\sqrt{\OPT})+\eps$ agnostic \textit{tester-learner} for general halfspaces. 

Going beyond the agnostic model, \citet{DiakonikolasKKT22-biased-halfspace-general-Massart} gives algorithm and hardness results for learning general halfspace under \textit{Massart noise}~\citep{Massart06}. \citet{DiakonikolasDKW23-biased-halfspace-RCN} considers learning general halfspaces under the weaker \textit{random classification noise (RCN)} model. Also on under RCN, \citet{DiakonikolasDKW23-information-conputation-biased-margin-halfspaces-RCN} makes no anti-concentration assumption on $x$ and therefore their results carry over to the biased case. 

We note that the techniques for classification do not carry over to the regression case. For instance, in the agnostic setting of classification problems, the ground truth classifier is correct $1-\OPT$ of the time. In regression problems, however, the $y$-value can be perturbed on all outcomes. 


\textbf{Single index models.} Learning a ReLU neuron, in the realizable setting, is a special case of the \textit{single index models} (SIMs)~\citep{KakadeKKS11-single-index-early}, where the $y$ value depends on a single, unknown direction $v$. One key difference of SIM is that the joint distribution of $\lrangle{x,v}$ and $y$ is often known, corresponding to the case where we know $\|v\|$ and $b$ before-hand, and the task is to statistically estimate the direction $v$. The CSQ and SQ hardness of this problem are completely characterized by the information exponent~\citep{DamianLS22-CSQ-hardness-polynomial-of-one-dimension} and the generative exponent~\citep{DamianPLB24-single-index-optimal-SQ-lower-bound}, and a SQ-CSQ separation appears as the link function (activation function) becomes more involved. 

Lately, several works \citep{GollakotaGKS23-SIM-agnostic-1st, ZarifisWDD24-SIM-agnostic-2nd, WangZDD2024-agnostic-gaussian-SIM} sought to extend the study of SIMs to agnostic case. Particularly, the recent independent work of \citet{WangZDD2024-agnostic-gaussian-SIM} also proposed a two-stage algorithm to give a constant approximation, for link functions that satisfy a set of assumptions. We note that the key difference between these regimes and ours is again due to our allowing for \textit{arbitrary bias}, which breaks all usual assumptions for SIMs such as $\EE_{z\sim\calN(0,1)}[\sigma(z)^2] = 1$. 

The task of realizably learning a single ReLU neuron, however, is indeed solved by earlier works on SIMs and isotonic regression~\citep{kakade2011efficient, kalai2009isotron}. A number of other works ~\citep{Tian17-learning-unbiased-relu-under-gaussian-using-gradient-1, Soltanolkotabi17-learning-unbiased-relu-under-gaussina-using-gradient-2, KalanSA19-learning-unbiased-relu-under-gaussina-using-gradient-3} also showed the effectiveness of various gradient methods in this relatively simple setting. 


\textbf{Agnostic learning of a single ReLU. } While a single ReLU neuron is relatively easy to learn to error $\OPT + \eps$ with the realizable assumption, one line of work~\citep{GoelKK19-relu-hardness-1, DiakonikolasKZ20-relu-hardness-2, GoelGK20-relu-hardness-3, DiakonikolasKPZ-COLT21-OptimalityOfPolyRegression, DiakonikolasKR23-cryptographic-hardness-Gaussian-ReLU} gave strong evidence that the same task takes quasi-polynomial time without this assumption. In other words, it is hard to substantially outperform polynomial regression~\citep{KalaiKMS05-L1-poly-regression} on this goal, in the agnostic setting. On the other hand, there exists PTAS with error $(1+\mu)\OPT + \eps$ which runs in time $\mathrm{poly}(d, \eps)\exp(1 / \mu)$\citep{DiakonikolasGKK20-agnostic-learning-unbiased-relu-constant-convex-surrogate, DiakonikolasKKT21-agnostic-proper-learning} for unbiased ReLUs.  

Further relaxing the objective, a number of $\poly(d/\eps)$-time algorithms are known. \cite{GoelKK19-relu-hardness-1} gave an $O(\OPT^{2/3})+\eps$ algorithm for unbiased ReLUs via reducing to learning \textit{homogeneous halfspace}. Some work applied GD on the convex \textit{matching loss} instead of the $L^2$ objective itself, achieving better time complexity under weaker distributional assumptions, for a class of activation functions including unbiased ReLU~\cite{DiakonikolasGKK20-agnostic-learning-unbiased-relu-constant-convex-surrogate, WangZDD23-agnostic-learning-unbiased-activations-constant-SGD-convex-surrogate}. Notably, we always need \textit{some} distributional assumptions for these results, as \cite{DiakonikolasKMR22-hard-to-constnat-approximate-relu-distribution-free} demonstrated a cryptographic hardness result for distribution-free constant-factor approximation. 

Another line of work proposes to apply GD on the $L^2$ objective itself. For unbiased ReLU, \cite{FreiCG20-agnostic-learning-unbiased-relu-OPT2/3} proved an $O(\OPT^{2/3})+\eps$ loss guarantee. Later, \cite{DiakonikolasKTZ22-agnostic-learning-unbiased-activations-GD} showed that GD actually achieves the best-possible $O(\OPT)+\eps$ error, for unbiased ReLUs. Their method also extends for ReLUs with a known \textit{positive} bias. The most relevant work to ours is that of \cite{AwasthiTV-ICLR23-ReLUGD}, where GD is shown to produce constant approximation even under moderately negative bias. However, as $b\to -\infty$, their approximation factor depends exponentially on $b$, which translates to a $(\OPT^{-\poly(\eps)} + \eps)$-approximation. All methods above are CSQ in nature, so they do not extend to our problem.

\section{Mills ratio and Gaussian integrals}
\label{appendix:mills-ratio-and-related-lemmas}
Let $\varphi,\Phi$ be the pdf and cdf of a one-dimensional Gaussian respectively. The \textit{Mills ratio} at $t$, for $t>0$, is defined as:
$$ m(t) = \frac{1 - \Phi(t)}{\varphi(t)}, $$
namely the ratio between Gaussian tail and Gaussian density. This ratio has the following asymptotic expansion ``around infinity" (see e.g. \cite{book:small2010expansions}):
$$ m(t) \sim \frac{1}{t} - \frac{1}{t^3} + \frac{1\cdot 3}{t^5} - \frac{1\cdot 3\cdot 5}{t^7} + \ldots, $$
where the error of every partial sum is bounded by the absolute value of the next term. 

We note that this series diverges for any fixed $t$, since the numerator grows like a factorial. On the other hand, at the limit $t\to\infty$, the more terms we have in a partial sum, the quicker the approximation error of this partial sum converges to zero.

Using Mills ratio, we now estimate the values of the following integrals:
\begin{itemize}
    \item The following is often used in thresholded PCA: As $b\to -\infty$, 
    \begin{align*}
        \int_{|b|}^\infty t^2 \,d\Phi(t) &= |b|\varphi(b) + \Phi(b) \\
        & = \big(1+o(1)\big) b^2\Phi(b).
    \end{align*}

    \item The next one is very useful in general, as it bounds the $L^2$ norm of a negatively biased ReLU nrueon: as $b\to -\infty$, 
    \begin{align*}
        \int_{|b|}^\infty (t+b)^2 \,d\Phi(t) & = (b^2 + 1)\Phi(b) - |b|\varphi(b) \\
        &  = \frac{2+o(1)}{b^2}\Phi(b). 
    \end{align*}

    \item The following does not need $b\to-\infty$ but is often used. For all $b$, we have:
    \[ \int_{-b}^\infty t(t+b)\,d\Phi(t) = \Phi(b). \]
    We note that this is a special case of \cref{lem:CSQ-explicit-Hermite-coefficient}. 

    \item
    This is used in proof for \cref{lem:PCA-lem-3} with $b-\tau$ in place of the value $b$ below. Let $h = \frac{c}{|b|}$ for some constant $c$, then:
    $$ \int_{|b| + h}^\infty (t + b)^2 \varphi(t)\,dt = (1+o(1))\cdot e^{-c}\cdot \left( 1 + c + \frac{c^2}{2} \right) \cdot \int_{|b|}^\infty (t+b)^2 \varphi(t)\,dt. $$
    
    Using the asymptotic expansion of Mills ratio:
    \begin{align*}
        & \int_{|b| + h}^\infty (t+b)^2 \varphi(t)\,dt = (b^2 + 1)\Phi\left( b - h\right) +  (b+h) \varphi\left( b - h \right) \\
        & = \varphi\left( b - h\right) \Bigg[ (b^2 + 1) \lrparenth{\frac{1}{|b - h|} - \frac{1}{ |b - h |^3}+ \frac{3+o(1)}{|b - h|^5} } + \underbrace{\left( b + h \right)}_{\frac{b^2 - h^2}{-|b-h|}} \Bigg] \\
        & = \frac{\varphi(b-h) }{|b-h|} \lrbracket{ (b^2+1)\left(1 - \frac{1}{ (b - h)^2}+ \frac{3+o(1)}{(b - h)^4}\right) - (b^2 - h^2) } \\
        & = \Phi(b - h) \lrbracket{ (b^2+1)\left(- \frac{1}{ (b - h)^2}+ \frac{3+o(1)}{(b - h)^4}\right) + 1 + h^2) } \\
        & = \Phi(b - h) \cdot \frac{-(b^2 + 1) + (1+h^2)(b-h)^2 + \frac{(3+o(1))(b^2 + 1)}{(b-h)^2}}{ (b - h)^2} \\
        & = \Phi(b - h) \cdot \frac{-1 - 2bh + h^2 + b^2h^2 -2bh^3 + h^4 + \frac{(3+o(1))(b^2 + 1)}{(b-h)^2}}{ (b - h)^2}.  \\
    \end{align*}
    
    Plugging in $h = \frac{c}{|b|} = o(1)$, we have
    \begin{align*}
        \int_{|b| + h}^\infty&  (t+b)^2 \varphi(t)\,dt \\
        & = \Phi(b - h) \cdot \frac{-1 + 2c + h^2 + c^2 +2ch^2 + h^4 + \frac{(3+o(1))(b^2 + 1)}{(b-h)^2}}{ (b - h)^2} \\
        & = \Phi(b - h)\cdot \frac{c^2 + 2c + 2 + o(1)}{b^2} \\
        & = (1+o(1))\cdot e^{-c}\cdot \left( 1 + c + \frac{c^2}{2} \right) \cdot \int_{|b|}^\infty  (t+b)^2\varphi(t)\,dt. 
    \end{align*}

\end{itemize}


\section{Omitted proofs from \cref{sec:main-warm-up-and-prelim}}\label{appendix:proof-prelims}
Recall that $\hat{\calD}$ is the distribution over $(x,\hat{y})$ where $\hat{y} = \frac{y}{\|v\|}$. Suppose the best-fitting ReLU for $\calD$ is some $\sigma(\lrangle{x,v} + b)$ where $v$ is not necessarily a unit vector. 
\begin{proposition}[Same as \cref{prop:prelim:scaling-v}]
    In the above setting, the optimal ReLU for $\hat{\calD}$ is $\sigma(\lrangle{x, \hat{v}} + \hat{b})$ where $\|\hat{v}\| = v / \|v\|$ is a unit vector and $\hat b = b/\|v\|$, and it has loss $\widehat{\OPT} = \hat{L}(v,b) = \OPT / \|v\|^2$. 
    
    Moreover, suppose parameters $w,b_w$ incur loss $\hat{L}(w, b_w) \leq \alpha\cdot\widehat{\OPT} + \eps$ on the normalized problem $\hat{\calD}$. Then, the pair $(\|v\|w, \|v\|b_w)$ would incur loss $L(\|v\|w, \|v\|b_w) \leq \alpha\cdot \OPT + \|v\|^2\eps$ on the original problem $\calD$. 
\end{proposition}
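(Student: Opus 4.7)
The plan is to leverage the positive homogeneity of the ReLU activation, namely $\sigma(cz) = c\sigma(z)$ for any $c\ge 0$ and $z\in\RR$. Applied to scaled parameters, this gives the identity
\[
\sigma(\lrangle{x, \|v\| w} + \|v\| b_w) \;=\; \|v\|\cdot \sigma(\lrangle{x,w}+b_w),
\]
which is the only structural fact about $\sigma$ we will use. Everything else is substitution.

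First I would establish the single scaling identity tying $L$ and $\hat L$: for any parameters $(w, b_w)$,
\begin{align*}
L(\|v\|w,\|v\|b_w)
&= \tfrac{1}{2}\EE_{(x,y)\sim\calD}\big[\bigparen{y-\|v\|\sigma(\lrangle{x,w}+b_w)}^2\big]\\
&= \tfrac{\|v\|^2}{2}\EE_{(x,\hat y)\sim\hat\calD}\big[\bigparen{\hat y-\sigma(\lrangle{x,w}+b_w)}^2\big]
\;=\;\|v\|^2\cdot\hat L(w,b_w),
\end{align*}
where the middle step factors out $\|v\|$ using $\hat y = y/\|v\|$. This is a bijection between parameter pairs $(w,b_w)\leftrightarrow(\|v\|w,\|v\|b_w)$ under which losses scale by the fixed positive constant $\|v\|^2$.

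For the first part of the proposition, I would plug $(w, b_w) = (\hat v, \hat b) = (v/\|v\|, b/\|v\|)$ into the identity to get $L(v,b) = \|v\|^2 \hat L(\hat v, \hat b)$. Since $(v,b)$ is by assumption the best-fit ReLU under $\calD$ with loss $\OPT$, and since the scaling is a bijection that preserves the ordering of losses (monotone multiplication by $\|v\|^2$), the pair $(\hat v, \hat b)$ is the best-fit under $\hat\calD$ and achieves $\widehat{\OPT} = \OPT/\|v\|^2$, with $\|\hat v\|=1$ as claimed.

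For the second part, the same identity applied to the given $(w, b_w)$ yields
\[
L(\|v\|w, \|v\|b_w) \;=\; \|v\|^2\cdot \hat L(w,b_w)\;\le\; \|v\|^2\bigparen{\alpha\cdot\widehat{\OPT} + \eps}\;=\;\alpha\cdot\OPT + \|v\|^2\eps,
\]
using $\|v\|^2\widehat{\OPT} = \OPT$ from the first part. No step is a genuine obstacle here; the only subtlety worth flagging is that positive homogeneity requires $\|v\|\ge 0$, which is automatic, and that the argument would break for a general activation without this property — so the proof really is specific to (positively homogeneous) ReLU.
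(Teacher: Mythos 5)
Your proof is correct and follows essentially the same route as the paper: both rest on the positive homogeneity identity $\sigma(\lrangle{x,\|v\|w}+\|v\|b_w)=\|v\|\,\sigma(\lrangle{x,w}+b_w)$, which gives the scaling relation $L(\|v\|w,\|v\|b_w)=\|v\|^2\hat L(w,b_w)$, from which both the correspondence of optima and the transfer of the $\alpha\cdot\widehat{\OPT}+\eps$ bound (with the additive term picking up the factor $\|v\|^2$) follow by substitution. Your explicit remark that the parameter rescaling is a loss-order-preserving bijection is a slightly more careful justification of the first claim than the paper's one-line statement, but it is the same idea.
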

\begin{proof}
    The first fact follows from the fact that the ReLU function $\sigma$ is homogeneous:
    \[ \sigma(\lrangle{x,w}+b_w) - y = \|v\|\cdot \left( \sigma \Big(\lrangle{x,\frac{w}{\|v\|}} + \frac{b_w}{\|v\|} \Big) - \hat{y}\right),\text{ for all choices of }w\in \RR,b_w\in \RR. \]
    
    The second is a result of scaling everything by $1/\|v\|$, except the additive $\eps$:
    \begin{align*}
        \frac{1}{2}\EE_{\calD}[(\|v\| h(x) - y)^2] & = \|v\|^2 \cdot \frac{1}{2}\EE[ (h(x) - \hat y)^2 ] \\
        & \leq \|v\|^2 \big(\frac{\alpha}{2} \cdot \EE_{\hat \calD}[(\sigma(\lrangle{x, \hat v} + \hat b) - \hat y)^2] + \eps\big) \\
        & = \alpha \cdot \frac{1}{2} \EE[(\sigma(\lrangle{x, v} + b) - y)^2] + \|v\|^2 \eps \\
        & = \alpha\cdot\OPT + \|v\|^2 \eps.
    \end{align*}
\end{proof}

\begin{proposition}[Same as \cref{prop:prelims:search-v-b}]
    Let $\hat v$ be the unit vector in $v$'s direction. Let $\delta_v = \beta - \|v\|$ and $\delta_b = \gamma - b$ denote the additive errors of our parameter estimations. Then if $|\delta_v|, |\delta_b| \leq 0.1\sqrt{\eps}$, we have:
    \[ \alpha \cdot L(\beta \hat v, \gamma) \leq O(\alpha)\cdot L(v, b) + O(\eps). \]
\end{proposition}
\begin{proof}
    We will repeatedly apply the elementary inequality $(a+b)^2 \leq 2a^2 + 2b^2$. First, we compare the loss of the estimated optimal ReLU with $\OPT$:
    \begin{align*}
        \EE\big[\big(\sigma(\lrangle{x, v+\delta_v \hat v} & + b + \delta_b) - y\big)^2 \big] \\
        & \leq 2\EE\big[\big(\sigma(\lrangle{x, v+\delta_v \hat v} + b + \delta_b) - \sigma(\lrangle{x, v} + b)\big)^2 \big] + 2 \OPT \\
        & \leq 4\EE\big[\big(\sigma(\lrangle{x, v+\delta_v \hat v} + b + \delta_b) - \sigma(\lrangle{x, v} + b + \delta_b)\big)^2 \big] \\
        & \;\; + 4\EE\big[\big(\sigma(\lrangle{x, v} + b + \delta_b) - \sigma(\lrangle{x, v} + b) \big)^2 \big] + 2\OPT.
    \end{align*}

    Since $\sigma$ is 1-Lipschitz, we can remove it, cancel the terms before squaring, and get an upper bound:
    \begin{align*}
        \EE\big[\big(\sigma(\lrangle{x, v+\delta_v \hat v} + b + \delta_b) - y\big)^2 \big] & \leq 4\EE [\big(\lrangle{x, \delta_v \hat v})^2 ] + 4 \delta_b^2 + 2\OPT.
    \end{align*}
    Note that the first term on the right is just $4\delta_v^2$. Taking $|\delta_v| = |\delta_b| \leq 0.1\sqrt{\eps}$, and the proof is finished. 
\end{proof}

\begin{lemma}[Same as Lemma \ref{lem:upper-bounding-OPT}]
    Suppose $\alpha \geq 3$. Let $\sigma(\lrangle{x,v}+b)$ be the optimal ReLU with loss $\OPT$, where $\|v\|=1$ and $b$ sufficiently negative. If the zero function incurs loss at least $\alpha\cdot\OPT + \eps$, then we have
    \[ \OPT < \frac{3\Phi(b)}{\alpha b^2}\text{, and }\eps < \frac{3\Phi(b)}{b^2}. \]
\end{lemma}
\begin{proof}

    By assumption we know that $\frac{1}{2}\EE[y^2] > \alpha \OPT$. Using Gaussian integrals and Mills ratio, we have:
    \begin{align*}
        \alpha \cdot \OPT + \eps < \frac{1}{2}\EE[y^2 ]  & \leq \EE\big[ \big( y - \sigma(\lrangle{x,v} + b) \big)^2 \big] + \EE[ \sigma(\lrangle{x,v} + b)^2 ] \\
        & = 2\OPT + \big(2 + o(1)\big) \frac{\Phi(b)}{b^2}.
    \end{align*}
    Rearranging the terms:
    \begin{align*}
        \OPT + \frac{\eps}{\alpha - 2}& < \frac{\big(2 + o(1)\big)\Phi(b)}{(\alpha - 2)b^2} \\
        & = \frac{3 \Phi(b)}{\alpha b^2}.
    \end{align*}
    Since $\alpha \geq 3$, both claims follow.  
\end{proof}

\section{Omitted proofs from reweighted PGD(\cref{subsec:main-reweighted-PGD})}\label{sec:appendix-reweighted-PGD}


In this section we provide the missing details from \cref{subsec:main-reweighted-PGD}. First, we introduce some notations used in this section. We will focus heavily on the plane spanned by vectors $w_t$ and $v$. As in \cref{subsec:main-reweighted-PGD}, we use $x_w = \lrangle{x,w_t}$ and $x_\perp = \langle x, v^\perp\rangle$, where:
\[ v^\perp = \frac{v - \lrangle{v,w_t}w_t}{\|v - \lrangle{v,w_t}w_t\|}. \]

We note that every $x$ on this plane can be written as $x = x_w\cdot w_t + x_\perp\cdot v^\perp$ by orthogonality. 

\subsection{Re-centering the Gaussian covariates}\label{subsec:PGD-weighted-PGD}
As noted in \cref{sec:main-warm-up-and-prelim}, GD modifies the direction of $w_i$ by gearing it towards the direction of the \textit{conditioned correlation} between $x$ and $y$, $\EE[x y\cdot \indicator{\lrangle{x,w_t} \geq -b}]$,
to learn about the direction $v^\perp$. One reason for its failure is that the condition function is too coarse to have the following ideal properties:
\begin{enumerate}
    \item It should attempt to ignore the regions where $\lrangle{x,v} \leq -b$, since those are where the optimal ReLU neuron is zero and not informative. Conversely, it would put more weight on regions with large $\lrangle{x,v}$. 
    \item It should do so using the current estimated direction $w_i$, without knowing the true direction $v$. 
\end{enumerate}

In the rest of this subsection we will describe our approach, which applies both a \textit{weight function} $f_{t}(x)$ and a \textit{condition function} $\mathbbm{1} \{\lrangle{x,w_t} \geq \Tilde{b}\}$, in order to achieve the objectives above. Instead of $\EE[x y\cdot \indicator{x_w \geq |b|}]$, on each iteration we make a ``gradient'' update using the vector
\[ v_{\mathrm{update}} =  \big(I_d - w_t w_t^\top\big) \EE [f_{t}(x)\cdot x \cdot y \cdot \mathbbm{1} \{ x_w \geq  \Tilde{b} \} ], \]
where the two functions are defined via parameters $\rho,\lambda\in (0,1)$, to be determined in \cref{appendix:proof-lem-PGD-contribution-noise}:
\[ \begin{cases}
    f_t(x) = \exp\big( \rho|b|x_w - \frac{1}{2}\rho^2 b^2 \big), \text{ and }\\
    \Tilde{b} = (\rho + \lambda - \lambda^2\rho)|b|.
\end{cases}\]

Now we explain the choice of these values. Weight function $f_t(x)$ is the relative density that shifts the $x$-marginal from standard Gaussian $\calN(0, I_d)$ to $\calN(\rho |b| w_t, I_d)$. Parameter $\rho$ controls the amount of shift, relative to $b$. 

Parameter $\lambda$ can be understood as a lower bound for $\lrangle{w_t, v}$. The condition function $\Tilde{b} = (\rho + \lambda - \lambda^2\rho)|b|$ is chosen to minimize the influence of noise while maintaining most of contribution from ReLU. Specifically, if $\lrangle{w_t, v} = \lambda$, then the value $\Tilde{b}$ is precisely the $x_w$-coordinate of the point in region $\{x: \lrangle{x,v} \geq |b|\}$ that is the \textit{closest to the new Gaussian center $\rho|b|w_t$}. In other words, it cuts through the region where ReLU makes a positive contribution, in a way that at least half of the contribution from ReLU are preserved. 



\subsection{Contribution from optimal ReLU}\label{subsec:PGD-contribution-relu}
The following is the technical lemma used in the proof of \cref{lem:main-PGD-lower-bound-ReLU}. In short, we lower-bound the contribution from ReLU on every ``horizontal slice'' (as in \cref{fig:intro:gradient-when-bias-small-vs-large}) of fixed $x_w$. Let $\Lambda = \lrangle{w_t,v}$. 
\begin{lemma}\label{lem:technical-relizable-update}
    Suppose $\rho,\lambda\in (0,1)$ satisfy $\rho\lambda > \frac{1}{2}$, and assume $b < 0$. If $\Lambda \geq \lambda$, then on event $E = \{x_w \geq (\lambda-\lambda^2\rho+\rho)|b|\}$, for all $x_\perp \geq 0$ we have:
    \begin{align}
        \sigma\big(\langle x_w w + x_\perp v^\perp,v\rangle+b\big) - & \sigma\big(\langle x_w w - x_\perp v^\perp,v\rangle+b\big) \nonumber\\
        & \geq \sqrt{1 - \Lambda ^2}\cdot \sigma\Big(x_\perp - \sqrt{1-\lambda^2}(1-\lambda\rho)|b| \Big).
    \end{align}
\end{lemma}
\begin{proof}
    Since $v = \Lambda w + \sqrt{1-\Lambda^2}v^\perp$, we have:
    \begin{align*}
        \sigma\big(\langle x_w w + x_\perp v^\perp,v\rangle+b\big) - & \sigma\big(\langle x_w w - x_\perp v^\perp,v\rangle+b\big) \nonumber\\
        & = \sigma\Big(\Lambda x_w + \sqrt{1-\Lambda^2} x_\perp - |b| \Big) - \sigma\Big(\Lambda x_w - \sqrt{1-\Lambda^2} x_\perp - |b|\Big) \\
        & \geq \sigma\Big(\Lambda x_w + \sqrt{1-\Lambda^2} x_\perp - |b| \Big) - \sigma(\Lambda x_w - |b|)
    \end{align*}

\textbf{Case 1}, when $\Lambda x_w < |b|$. In this case the second term is zero, and the first term is:
\begin{align*}
    \sigma\Big(\Lambda x_w + \sqrt{1-\Lambda^2} x_\perp - |b| \Big) &\geq \sigma\Big(\sqrt{1-\Lambda^2} x_\perp + \Lambda (\lambda - \lambda^2\rho + \rho)|b| - |b| \Big) \\
    & \geq \sigma\Big(\sqrt{1-\Lambda^2} x_\perp + \Lambda (\Lambda - \Lambda^2\rho + \rho)|b| - |b| \Big),
\end{align*}
since $\lambda - \lambda^2\rho + \rho$ has derivative $1 - 2\lambda \rho < 0$. We can thus further bound it by:
\begin{align*}
    \sigma\Big(\sqrt{1-\Lambda^2} x_\perp + \Lambda (\Lambda - \Lambda^2\rho + \rho)|b| - |b| \Big) & =  \sigma\Big(\sqrt{1-\Lambda^2} x_\perp + (\Lambda^2 - \Lambda^3\rho + \Lambda\rho - 1)|b| \Big) \\
    & = \sigma\Big(\sqrt{1-\Lambda^2} x_\perp - (1 - \Lambda^2)(1-\Lambda\rho)|b| \Big) \\
    & \geq \sigma\Big(\sqrt{1-\Lambda^2} x_\perp - \sqrt{1-\Lambda^2}\sqrt{1-\lambda^2}(1-\lambda\rho)|b| \Big) \\ 
    & = \sqrt{1 - \Lambda ^2}\cdot \sigma\Big(x_\perp - \sqrt{1-\lambda^2}(1-\lambda\rho)|b| \Big),
\end{align*}
as desired. 

\textbf{Case 2}, when $\Lambda x_w < |b|$. In this case we have:
\begin{align*}
    \sigma\Big(\Lambda x_w + \sqrt{1-\Lambda^2} x_\perp - |b| \Big) - \sigma(\Lambda x_w - |b|) & = \Lambda x_w + \sqrt{1-\Lambda^2} x_\perp - |b| - (\Lambda x_w - |b|) \\
    &= \sqrt{1+\Lambda}x_\perp \\
    & \geq \sqrt{1 - \Lambda ^2}\cdot \sigma\Big(x_\perp - \sqrt{1-\lambda^2}(1-\lambda\rho)|b| \Big).
\end{align*}
\end{proof}

\subsection{Proof for Lemma \ref{lem:lower-bound-w-v}}\label{appendix:proof-lem-lower-bound-w-v}
\begin{lemma}[same as Lemma \ref{lem:lower-bound-w-v}]
    Suppose \cref{assumption:standard} holds. For any unit vector $w$, if $L(w, b) > \alpha\cdot \OPT + \eps$, then for all sufficiently negative $b$:
    \[ \|w-v\|^2 \geq \Omega \left( \frac{\alpha\cdot \OPT + \eps}{\Phi(b)} \right). \]
\end{lemma}
\begin{proof}
   Let $F(w,b)$ be the realizable loss $\EE[(\sigma(\lrangle{x,w}+b) - \sigma(\lrangle{x,v}+b))^2]$. By an elementary inequality we have $L(w, b) \leq 2\OPT + F(w,b)$, which implies $F(w) > 0.4\alpha\cdot \OPT + \eps$ for large $\alpha$. Meanwhile, $F(w)$ is upper bounded by $\|w-v\|$ by the following:
   \begin{align*}
        F(w, b) & = \EE[(\sigma(\lrangle{x,w} + b) - \sigma(\lrangle{x,v} + b))^2]\\
        & \leq \EE[(\lrangle{x,w} - \lrangle{x,v})^2 \indicator{\lrangle{x,w} + b \geq 0\text{ or }\lrangle{x,v} + b \geq 0}] \\
        & \leq 2 \EE[\lrangle{x, w-v}^2 \indicator{\lrangle{x,w} \geq |b|}] \\
        & = 2\int_{|b|}^\infty \int_{-\infty}^\infty \lrangle{x_w w + x_\perp v^\perp, w-v}^2 \,d\Phi(x_\perp)\,d\Phi(x_w) \\
        & = 2\int_{|b|}^\infty \int_{-\infty}^\infty \left( x_w \frac{\|w-v\|^2}{2} - x_\perp \frac{\|w-v\|\|w+v\|}{2} \right)^2 \,d\Phi(x_\perp)\,d\Phi(x_w) \\
        & \leq \int_{|b|}^\infty  \int_{-\infty}^\infty \big( x_w^2 \|w-v\|^4 - x_\perp^2 \|w-v\|^2\big) \,d\Phi(x_\perp)\,d\Phi(x_w) \\
        & = \int_{|b|}^\infty \big(x_w^2 \|w-v\|^4 + \|w-v\|^2\big) \,d\Phi(x_w) \\
        & \leq \big( 1.1\|w-v\|^4 b^2 + \|w-v\|^2 \big)\Phi(b),
    \end{align*}
    for sufficiently negative $b$. 

    If $\|w-v\|\leq \frac{1}{|b|}$, then $F(w, b) \leq 2.1\|w-v\|^2\Phi(b)$, and we have $\|w-v\| > \sqrt{\frac{0.4 \alpha\cdot \OPT + \eps}{2.1\Phi(b)}}$ as desired. 

    On the other hand, if $\|w-v\| > \frac{1}{|b|}$, then $\|w-v\|^2\Phi(b) \geq \frac{\Phi(b)}{b^2} \geq \frac{\alpha\cdot \OPT + \eps}{6}$ by \cref{lem:upper-bounding-OPT}, and the proof is finished. 
\end{proof}

\subsection{Proof of Lemma \ref{lem:PGD-contribution-noise}: contribution from noise}\label{appendix:proof-lem-PGD-contribution-noise}

\begin{lemma}[same as Lemma \ref{lem:PGD-contribution-noise}]
    Suppose \cref{assumption:standard} holds. Let $\lambda,\rho \in (0,1)$ be constants. For all unit vector $u$ such that $u\perp w$, and for all sufficiently negative $b$, we have:
    \begin{align}
        \lrangle{v_\mathrm{noise}, u} & = O(\sqrt{\OPT})\cdot \frac{e^{-\frac{b^2}{2} \big( -\rho^2 + \frac{1}{2}(\lambda + \rho \lambda^2 - \rho)^2 \big)}}{|b|^{1/2}}.
    \end{align}
\end{lemma}
\begin{proof}
    Using Cauchy-Schwarz, we can bound $\lrangle{v_\mathrm{noise}, u}$ by:
    \begin{align*}
        \underset{\calD}{ \EE }[f(x)\cdot \lrangle{x,u} & \cdot (y - \sigma(\lrangle{x,v}+b))\indicator{E}] \\
        & \leq \sqrt{ \EE [(y-\sigma(\lrangle{x,v}+b))^2]} \sqrt{\EE [f(x)^2 \indicator{x_w \geq (\rho + \lambda - \lambda^2\rho)|b|}]}\\
        & = \sqrt{2\OPT} \cdot \sqrt{ \EE [f(x)^2 \indicator{x_w \geq (\rho + \lambda - \lambda^2\rho)|b|}]} 
    \end{align*}
    The second term can be explicitly written as an integral:
    \begin{align*}
         \EE[f(x)^2\indicator{x_w \geq (\rho + \lambda - \lambda^2\rho)|b|}] & = \int_{(\rho+\lambda+\rho\lambda^2)|b|}^\infty \varphi(x_w) \cdot \exp(2\rho|b|x_w-\rho^2b^2) \,dx_w \\
         &= \frac{1}{\sqrt{2\pi}}\int_{(\rho+\lambda+\rho\lambda^2)|b|}^\infty  \exp\left(-\frac{1}{2}x_2^2 + 2\rho|b|x_w - 2\rho^2b^2 +\rho^2b^2 \right) \,dx_w \\
         &= \exp(\rho^2b^2)\cdot \underset{x_w \sim \calN(2\rho |b|, I_d)}{\PP} \big[x_w \geq (\lambda+\rho\lambda^2 + \rho)|b| \big]\\
         &= \exp(\rho^2b^2)\cdot \Phi\big( (\lambda+\rho\lambda^2-\rho)b \big). 
    \end{align*}
    Plugging this back in, we have:
    \begin{align*}
        \lrangle{v_\mathrm{noise}, u} &\leq \sqrt{2\OPT} \cdot \sqrt{\exp(\rho^2b^2)\cdot \Phi\big( (\lambda+\rho\lambda^2-\rho)b \big)} \\
        &= O\big(\sqrt{\OPT}\big) \cdot \frac{1}{\varphi(\rho b)} \cdot \sqrt{\Phi\big( (\lambda+\rho\lambda^2-\rho)b \big)} \\
        &= O\big(\sqrt{\OPT}\big) \cdot \frac{1}{\varphi(\rho b)} \cdot \sqrt{\frac{\varphi\big( (\lambda+\rho\lambda^2-\rho)b \big)}{(\lambda + \rho\lambda^2 - \rho)|b|}}  \\ 
        &= O\big(\sqrt{\OPT}\big) \cdot \frac{e^{-\frac{b^2}{2}\big( - \rho^2 + \frac{1}{2}(\lambda + \rho \lambda^2 - \rho)^2 \big) }}{|b|^{1/2}} , 
    \end{align*}
    as desired. 
\end{proof}
\subsection{Proof for Lemma \ref{lem:choosing-lambda-rho}}\label{appendix:proof-lem-choosing-lambda-rho}
Now, we combine the previous lemmas to show that the direction of $v_\mathrm{update}$ is dominated by the contribution from ReLU, rather than noise. We will also determine the choice for parameters $\rho$ and $\lambda$. 
\begin{lemma}[same as Lemma \ref{lem:choosing-lambda-rho}]
    Suppose \cref{assumption:standard} holds where $b$ is sufficiently negative, and suppose $L(w_t, b) > \alpha\cdot \OPT$. Set $\lambda = 0.9$ and $\rho \in (0.3, 0.6)$. If $\lrangle{w_t, v} \geq \lambda$, the for all $u\perp w$ we have:
    \[ \frac{\abs{\lrangle{v_\mathrm{noise}, v^\perp}}}{\lrangle{v_\mathrm{relu}, v^\perp}} = e^{-\Omega(b^2)}. \]
\end{lemma}
\begin{proof}
    To compare \cref{eq:PGD-realizable-contribution} and \cref{eq:PGD-noise-contribution}, we first consider the terms involving $e^{-\frac{1}{b^2}}$. To ensure $\lrangle{v_\mathrm{noise}, v^\perp}$ dominates, we want it to have a \textit{smaller} coefficient inside $\exp \big(-\frac{b^2}{2}\big)$, namely:
\begin{equation}\label{eq:solving-lambda-rho}
    (1-\rho\lambda)^2 - \frac{1}{2} - \bigg( -\rho^2 + \frac{1}{2}(\lambda + \rho\lambda^2 - \rho)^2\bigg) \leq -\Omega(1).
\end{equation}

This is true when $\lambda \geq 0.9$ and $\rho \in [0.3, 0.6]$, in which case the right hand side is less than $-0.01$. Consequently, for any unit vector $u$ such that $u\perp w$,
\begin{align*}
    \frac{\abs{\lrangle{v_\mathrm{noise}, u}}}{\lrangle{v_\mathrm{relu}, v^\perp}} & = \frac{O(1)\cdot e^{-\frac{b^2}{2}\big( - \rho^2 + \frac{1}{2}(\lambda + \rho \lambda^2 - \rho)^2 \big) } / |b|^{1/2}}{ \Omega(\sqrt{\alpha}) \cdot e^{- \frac{b^2}{2} \big((1-\lambda \rho)^2 - \frac{1}{2}\big) } / |b|^{3/2} } \\
    & \leq \frac{O(1)\cdot e^{-\frac{0.16\cdot b^2}{2}} \cdot |b|}{ \Omega(\sqrt{\alpha}) } \\
    & = e^{-\Omega(b^2)}. 
\end{align*}
\end{proof}


\subsection{Making progress on each update}\label{appendix:PGD-making-progress}
Recall that our algorithm makes update by setting $w_{t+1} := \frac{w_t + \eta \hat v_\mathrm{update}}{\|w_t + \eta \hat v_\mathrm{update}\|}$, where $\hat v_\mathrm{update}$ is the estimation of $v_\mathrm{update}$ from $n$ new samples. In this subsection we will prove that, with appropriate values of $\eta$ and $n$, $w_t$ gets provably close to $v$ on each iteration with high probability. 

First we upper bound on magnitude $\|v_\mathrm{update}\|$:
\begin{lemma}
    Suppose \cref{assumption:standard} holds with $b$ sufficiently negative, and suppose $L(w_t, b) > \alpha\cdot \OPT$. For any $\rho, \lambda\in (0,1)$, if $\lrangle{w_t,v} \geq \lambda$, then:
    \[ \|v_\mathrm{update}\| = O(\|w_t - v\|)\cdot \Phi\big((1-\rho) b\big) \leq \poly(\eps). \]
\end{lemma}
\begin{proof}
    \begin{align*}
        \|v_\mathrm{update}\| & = \max_{\text{unit }u: u\perp w_t} \lrangle{v_\mathrm{update}, u} \\
        & \leq \lrangle{v_{\mathrm{relu}}, v^\perp} + \max_{\text{unit }u: u\perp w} \lrangle{v_\mathrm{noise}, u} \\
        & = \big(1 + o(1) \big)\lrangle{v_{\mathrm{relu}}, v^\perp},
    \end{align*}
    where the small $o$ is taken as $b\to -\infty$. It therefore suffices for us to upper bound $\lrangle{v_{\mathrm{relu}}, v^\perp}$. We will take $f_t(x)$ into account by considering the shifted distribution: let $\calD'$ be the modified distribution of $(x,y)$, with the $x$-marginal being $\calN(\rho |b|w_t, I_d)$. Then we have:
    \begin{align*}
        \lrangle{v_\mathrm{relu}, v^\perp} & = \underset{\calD}{ \EE } \Big[f_t(x)\cdot x_\perp \cdot \sigma(\lrangle{x,v} + b)\cdot \big(1 - \indicator{\lrangle{x,w_t} < (\rho+\lambda-\lambda^2\rho)|b|} \big)\Big]  \\
        & = \underset{\calD'}{ \EE } [x_\perp \cdot \sigma(\lrangle{x,v}+b)] - \underset{\calD'}{ \EE } [ x_\perp \cdot \sigma(\lrangle{x,v}+b) \indicator{\lrangle{x,w_t} < (\rho+\lambda-\lambda^2\rho)|b|} ].
    \end{align*}
    
    Note that the second term is nonnegative: for any fixed $x_w$, the value of $\sigma(\lrangle{x,v}+b)$ always grows with $x_\perp$, hence they have positive correlation. It now suffices to upper bound the first term:
    \begin{align}
        \underset{\calD'}{ \EE } [\langle x, v^\perp \rangle \cdot \sigma(\lrangle{x,v} + b)] & \leq \underset{\calD'}{ \EE } [\langle x-\rho|b|w_t, v^\perp \rangle \cdot \sigma(\lrangle{x-\rho|b|w_t,v} - (1 - \rho)|b|)] \nonumber \\
        & = \underset{\calD}{ \EE } [\langle x, v^\perp \rangle \cdot \sigma(\lrangle{x,v} - (1 - \rho)|b|)] \nonumber \\
        & = \langle v, v^\perp \rangle \Phi\big( (1-\rho)|b| \big). \nonumber
    \end{align}
    The first inequality uses $w_t\perp v^\perp$ and $\lrangle{w_t, v} \leq 1$. The second inequality applies change of variable, taking $x$ to be the previous $x - \rho|b|w_t$, since both random variables have distribution $\calN(0, I_d)$. The proof is finished by noting $\langle v, v^\perp \rangle = \sqrt{1 - \lrangle{w_t ,v}^2} = \Theta(\sqrt{1 - \lrangle{w_t , v}}) = \Theta(\|w_t - v\|)$. 
\end{proof}

\bnote{New:} Now we bound the number of fresh samples we need on each iteration of reweighted PGD. First, assuming $y$ is bounded, we have the following sample complexity bound:
\begin{lemma}\label{lem:PGD-finite-sample-v-update}
    Suppose $|y| \leq B$ almost surely. For all sufficiently negative $b$, if $\hat v_{\mathrm{update}}$ is calculated using $m = \poly\left( d, \frac{1}{\delta}, \frac{1}{\eps}, B \right)$ new samples, with probability $\geq 1 - \delta$ we have:
    \[ \langle \hat v_\mathrm{update}, v^{\perp}\rangle \geq 0.9 \|\hat v_{\mathrm{update}}\|, \]
    on any iteration $t$. 
\end{lemma}
\begin{proof}
    We can bound the LHS and RHS by:
    $$\langle \hat v_{\mathrm{update}}, v^\perp\rangle \geq \langle v_{\mathrm{update}}, v^\perp\rangle - \|\hat v_{\mathrm{update}} - v_{\mathrm{update}}\|,\text{ and }$$
    \[ 0.9\| \hat v_{\mathrm{update}} \| \leq 0.9(\|v_{\mathrm{update}}\| + \|\hat  v_{\mathrm{update}} - v_{\mathrm{update}}\|). \]
    Hence, it suffices to show that:
    \[ \langle v_{\mathrm{update}}, v^\perp\rangle - \|\hat v_{\mathrm{update}} - v_{\mathrm{update}}\| \geq 0.9(\|v_{\mathrm{update}}\| + \|\hat v_{\mathrm{update}} - v_{\mathrm{update}}\|). \]

    By Lemma \ref{lem:choosing-lambda-rho}, we know that $\langle v_{\mathrm{update}}, v^\perp \rangle \geq 0.95 \|v_{\mathrm{update}}\|$ as $b\to -\infty$, since $v_{\mathrm{relu}}$ dominates $v_{\mathrm{noise}}$. Now we want to show that, for all sufficiently negative $b$:
    \[ 0.05 \| v_{\mathrm{update}}, v^\perp \| \geq 1.9 \|\hat v_{\mathrm{update}} - v_{\mathrm{update}}\|. \]

    To lower bound the LHS we use Lemma \ref{lem:main-PGD-lower-bound-ReLU}, which states:
    \begin{align*}
        \|v_{\mathrm{update}}\| &= (1+o(1)) \| v_{\mathrm{relu}}\|\\
        & \geq \Omega\big(\sqrt{\alpha\cdot \OPT + \eps}\big) \cdot e^{-O(b^2)} \\
        & = \poly(\eps)\text{ by Lemma \ref{lem:upper-bounding-OPT}.}
    \end{align*}

    Meanwhile, to upper bound the RHS, we will use the assumption that $y$ is almost surely bounded, and apply multidimensional Chebyshev's inequality. 

    Let $\mu$ and $\Sigma$ denote the mean and covariance matrix of the vector $f_t(x) \cdot x\cdot y\cdot\indicator{x_w \geq (\rho+\lambda - \lambda^2\rho)|b|}$. For any unit vector $u$, we have:
    \begin{align*}
        u^\top \Sigma u & \leq \EE[ f_t(x)^2 \lrangle{x,u}^2 y^2 ] \\
        & \leq \big( \EE[y^4] \big)^{1/2} \big( \EE[f_t(x)^8] \big)^{1/4} \big( \EE[\lrangle{x,u}^8] \big)^{1/4} \\
        & = O(B^2) \cdot \left( \int_{-\infty}^\infty \exp(8\rho|b|x_w- 4\rho^2 b^2)\varphi(x_w)\,dx\right)^{1/4} \\ 
        & = O\big( B^2 e^{3\rho^2b^2} \big),
    \end{align*}
    where $\rho \in (0.3, 0.6)$ as before. Suppose the empirical estimate $\hat v_{\mathrm{update}}$ is calculated using $m$ samples, then the random variable $\hat v_\mathrm{update}$ has covariance $\frac{1}{m}\Sigma$. By multidimensional Chebyshev's inequality, for all $s>0$ we have:
    \begin{align*}
        \PP \bigg[ & \sqrt{(\hat v_\mathrm{update} -  v_{\mathrm{update}})^\top (\Sigma/m)^{-1} (\hat v_\mathrm{update} - v_{\mathrm{update}})} \geq s \bigg] \\
        & = \PP \left[ \|\hat v_\mathrm{update} - v_{\mathrm{update}}\|  \geq \frac{sBe^{1.5\rho^2b^2}}{\sqrt{m}} \right] \leq \frac{d}{s^2},
    \end{align*}
    To make this at most $\delta$, we take $s = \sqrt{\frac{d}{\delta}}$. The inequality now becomes:
    \[ \PP \left[ \|\hat v_\mathrm{update} - v_{\mathrm{update}}\| \geq \frac{\sqrt{d}Be^{1.5\rho^2b^2}}{\sqrt{m\delta}} \right] \leq \delta. \]

    Therefore, to conclude, we must set $m$ such that $\frac{\sqrt{d}sBe^{1.5\rho^2b^2}}{\sqrt{m}\delta}$ is at most some $\mathrm{poly}(\eps)$. A polynomial number of samples is sufficient:
    $$ m = \frac{O\left( \sqrt{d}s^2 B^{2}e^{3\rho^2b^2} \right) }{\sqrt{\delta} \cdot \mathrm{poly}(\eps)}  = \poly\left( d, \frac{1}{\delta}, \frac{1}{\eps}, B\right). $$
\end{proof}

    Now, if $\calD$ has unbounded $y$, we can slightly modify the samples so that we can more efficiently sample the same desired direction. Let $B_x(d, m, \eps, \delta)$ be the value with the following two properties: 
    \begin{enumerate}
        \item With probability at least $1-\delta$, all $m$ fresh samples $\{(x_i, y_i)\}_{i=1}^m$ from $\calD$ will have $\|x_i\| \leq B_x(d, m, \eps, \delta)$. 
        \item On the event $\|x_i\| \leq B_x(d, m, \eps, \delta)$, most of the value $\langle v_{\mathrm{relu}}, v^\perp\rangle$ should be kept (c.f. \cref{subsec:main-reweighted-PGD}):
        \begin{align*}
            \big | \EE[f_t(x)\cdot x_\perp\cdot \sigma(\lrangle{x,v}+b)\cdot \mathbbm{1}\{x_w \geq \tilde{b}, \|x\| > B_x(d,m,\eps,\delta)\}] \big | \leq 0.1\langle v_{\mathrm{relu}}, v^\perp\rangle    
        \end{align*}
    \end{enumerate}

    To satisfy the first property, it suffices to take $B_x = O\big(\sqrt{d \log(m/\delta)}\big)$ by Gaussian concentration. The second property is satisfied by $B_x = \sqrt{d}\cdot \mathrm{polylog}(1/\eps)$, again by Gaussian concentration, and the facts that $|b| = O\big(\sqrt{\log 1/\eps}\big)$ and $\langle v_{\mathrm{relu}}, v^\perp\rangle \leq \poly(\eps)$. 

    Let $B_y = |B_x| + O\big(\sqrt{\log (1/\eps)} \big)$ be the maximum value of $\sigma(\lrangle{x,v}+b)$ on the event that $\|x\| \leq B_x$. It follows that:
    \[ B_y(d,m,\eps,\delta) = \poly(d, \log m, \log (1/\delta), 1/\eps). \]

    On the other hand, by Lemma \ref{lem:PGD-finite-sample-v-update}, we know that assuming $|y| \leq B_y$, the sample complexity is bounded by some $\poly(d, 1/\delta, 1/\eps, B_y)$. Since $B_y$ has at most poly-log growth in $m$, and $m$ has at most polynomial growth in $B_y$, we can solve for an upper bound on $B_y$ and $m$ which are (at most) polynomial in the values $d, 1/\eps$, and $1/\delta$. 
    
    Finally, it's easy to check that all the lemmas for $\calD$ also hold for the new, truncated distribution, where we condition on the event that $\|x\| \leq B_x$ and $|y| \leq B_y$. Note that most of $v_{\mathrm{relu}}$ is kept, and $v_{\mathrm{noise}}$ becomes smaller as $y'$ is now closer to the target ReLU. 

Finally, we make provable progress on each iteration when $\eta$ is set properly:
\begin{lemma}[same as Lemma \ref{lem:PGD:making-progress}]
    Suppose \cref{assumption:standard} holds where $b$ is sufficiently negative, and suppose $L(w_t, b) > \alpha\cdot\OPT + \eps$ at some iteration $t$. Then, after an iteration of reweighted PGD with $\lambda = 0.9, \rho=0.5$, and $\eta = c_\eta \frac{\|w_t-v\|}{\|v_\mathrm{update}\|}$ for some $c_\eta \leq 0.1$, then:
\end{lemma}
\begin{proof}
    \begin{align*}
        \|w_{t} - v\|^2 - \|w_{t+1} - v\|^2 &= \frac{\lrangle{w_{t+1}, v} - \lrangle{w_t, v}}{2} \\
        & = \frac{1}{2}\bigg[ \lrangle{\frac{w_t + \eta \hat v_\mathrm{update}}{\|w_t + \eta \hat v_\mathrm{update}\|} , v } - \lrangle{w_t,v} \bigg] \\
        & = \frac{\eta \langle \hat v_\mathrm{update}, v \rangle + (1 - \|w_t + \eta \hat v_\mathrm{update}\|) \lrangle{w_t,v}}{2\|w_t + \eta \hat v_\mathrm{update}\| }\\
        & \geq \frac{\eta \langle \hat v_\mathrm{update}, v \rangle - \frac{\eta^2\|\hat v_\mathrm{update}\|^2}{2} \lrangle{w_t,v}}{2\|w_t + \eta \hat v_\mathrm{update}\| }.
    \end{align*}
    When $b$ is sufficiently negative, by \cref{lem:PGD-finite-sample-v-update}, with probability $\geq 1-\delta$ we have $\langle \hat v_\mathrm{update}, v^\perp \rangle \geq 0.9 \|\hat v_\mathrm{update} \|$. Therefore:
    \[ \langle \hat v_\mathrm{update}, v\rangle = \sqrt{1 - \lrangle{w_t,v}^2}\langle \hat v_\mathrm{update}, v^\perp \rangle \geq \|w_t-v\| \cdot 0.9\|\hat v_\mathrm{update}\|\]
    
    Suppose we take $\eta = c_\eta \frac{\|w_t-v\|}{\|\hat v_\mathrm{update}\|}$ for any $ c_\eta \leq 0.1$, then:
    \begin{align*}
        \|w_{t} - v\|^2 - \|w_{t+1} - v\|^2 & \geq \eta \|\hat v_\mathrm{update}\|\cdot \frac{0.9 \|w_t -v\| - \frac{1}{2}\eta \|\hat v_\mathrm{update}\| }{2\|w_t + \eta \hat v_\mathrm{update}\| }\\
        & = c_\eta\|w_t -v\|\cdot \frac{(0.9 - 0.5c_\eta)\|w_t -v\|}{O(1)} \\
        & = \Omega(c_\eta \|w_t -v\|^2). 
    \end{align*}
\end{proof}

Note that $\|\hat v_\mathrm{update} \| = \Theta(\|v_\mathrm{update}\|)$, and the latter is bounded by:
\[ \Omega\big( \sqrt{\alpha\cdot \OPT + \eps} \big)\cdot e^{-O(b^2)} \leq \|v_\mathrm{update}\| \leq O\Big( \|w_t - v\| \Phi\big((1-\rho) b \big)\Big). \]
Moreover, we have $\|w_t - v\| \geq \Omega\big(\sqrt{\alpha \cdot \OPT + \eps} / \sqrt{\Phi(b)}\big)$. Therefore, we have $\frac{\|w_t - v\|}{\|\hat v_{\mathrm{update}}\|} = e^{-\Theta(b^2)} = \poly(\eps)$. Hence by setgin $\eta$ to be some fixed polynomial in $\eps$, we have:
\[ c_\eta = \eta \frac{\|\hat v_{\mathrm{update}}\|}{\|w_t - v\|} = \poly(\eps) \leq -0.1. \]

We hence conclude that $T = \poly(\eps)$ iterations suffice to obtain the desired $\|w_t - v\| \leq O\big(( \alpha\cdot\OPT + \eps) / \Phi(b)\big)$ for some $t\in [T]$, which produces $L(w_t , b) \leq \alpha\cdot \OPT + \eps$ by \cref{lem:lower-bound-w-v}. Plugging this $T$ back to the sample complexity, and our main theorem for PGD (\cref{thm:PGD}) follows. 


\section{Omitted proofs from thresholded PCA (\cref{subsec:main-thresholded-PCA})}\label{sec:appendix-filtered-PCA}
Recall that thresholded PCA outputs the top eigenvector of matrix 
$$ M = \EE \big[ xx^\top \indicator{|y| \geq \tau} \big], $$
where $\tau = \frac{1}{|b|}$ is the threshold. 
We use $\hat M$ to denote the estimation of $M$ from $n$ samples:
\[ \hat M = \frac{1}{n} \sum_{i=1}^n x_i x_i^\top \indicator{|y_i| \geq \tau}. \]

During analysis we will also partition $\RR^d$ into two regions based on the sign of $\lrangle{x,v}+b$, and identify their contribution to $M$ separately:
\[ \begin{cases}
    M_0 = \EE_{(x,y)\sim \calD} \big[xx^\top \indicator{\abs{y} \geq \tau, \lrangle{v,x} + b < 0}\big],\\
    M_1 = \EE_{(x,y)\sim \calD} \big[xx^\top \indicator{\abs{y} \geq \tau, \lrangle{v,x} + b \geq 0}\big].
\end{cases} \]

The rest of this section is dedicated to proving this theorem. In \cref{subsec:PCA-three-lemmas}, we state and prove three lemmas in the following order:
\begin{enumerate}
    \item For any unit vector $u$, $u^\top M_0 u$ is small. (\cref{lem:PCA-lem-1})
    \item For any unit vector $u$ perpendicular to $v$, $u^\top M_1 u$ is small. (\cref{lem:PCA-lem-2})
    \item $v^\top M_1 v$ is large, which helps us identify the true direction. (\cref{lem:PCA-lem-3})
\end{enumerate}
In \cref{subsec:proof-of-PCA-thm}, we give a simple proof of the main theorem using these lemmas. 

\subsection{Technical lemmas.} \label{subsec:PCA-three-lemmas}
The following is used in the proof of Lemma \ref{lem:PCA-lem-1}:
\begin{fact}\label{lem:bounding-x-squared-unlikely-event}
    For all sufficiently small $p$ and any event $E$ with $\PP[E] = p$, we have:
    \[ \EE_{x\sim \calN(0, I_d)}[ \lrangle{x,u}^2 \indicator{E} ] = O\Big( p \log \frac{1}{p} \Big). \]
\end{fact}
\begin{proof}
    Consider event $E_u = \{ |\lrangle{x,u}| \geq |\Phi^{-1}(p / 2)| \}$. Clearly we have $\PP[E_u] = p$, and 
    $$ \EE_{x\sim \calN(0, I_d)}[ \lrangle{x,u}^2 \indicator{E} ] \leq \EE_{x\sim \calN(0, I_d)}[ \lrangle{x,u}^2 \indicator{E_u} ]. $$

    It now suffices to upper bound the right hand side. Consider threshold $t = \Phi^{-1}(p/2)$. Then, because 
    $$\PP[E_u] = 2\Phi(t) = \Theta\Big(\frac{e^{-t^2 / 2}}{t}\Big) = p, $$
    we have $t = O\big( \sqrt{\log (1 / p)}\big)$ for all sufficiently small $p$. Therefore:
    \begin{align*}
        \EE_{x\sim \calN(0, I_d)}[ \lrangle{x,u}^2 \indicator{E_u} ] & = 2\int_t^\infty s^2 d\Phi(s) \\
        & = \Theta\big(t^2 \Phi(t)\big) = O\Big( p \log \frac{1}{p} \Big).
    \end{align*}
\end{proof}

    
    

\begin{lemma}[same as Lemma \ref{lem:PCA-lem-2}]
    For all sufficiently negative $b$, and for any unit vector $u \perp v$, we have:
    \[ u^\top M_1 u \leq \Phi(b). \]
\end{lemma}
\begin{proof}
    Let $x_v,x_u$ be the component of $x$ along $v,u$, respectively. Because $u\perp v$, by the property of isotropic Gaussian, we can integrate along the $u$-direction for each fixed $x_v = \lrangle{x,v}$:
    \begin{align*}
        u^\top M_1 u & = \EE[ \lrangle{x,u}^2 \indicator{|y|\geq \tau, \lrangle{x,v} + b \geq 0} ] \\
        & \leq \EE[ \lrangle{x,u}^2 \indicator{\lrangle{x,v} + b \geq 0} ] \\
        & = \int_{|b|}^\infty \int_{-\infty}^\infty x_u^2  \,d\Phi(x_u) \,d\Phi(x_v) \\
        & = \Phi(b).
    \end{align*}
\end{proof}

The proof of the third lemma follows the intuitive description in \cref{subsec:main-thresholded-PCA}: the adversary can only suppress a fraction of the ReLU below the threshold. 
\begin{lemma}[same as Lemma \ref{lem:PCA-lem-3}]
    Suppose \cref{assumption:standard} holds. For all sufficiently negative $b$, we have:
    \[ v^\top M_1 v = \Omega\big( b^2 \Phi(b)\big). \]
\end{lemma}
\begin{proof}
    For convenience, we will often use random variable $z = \lrangle{x,v}$. Define event $A = \{z +b \geq \tau\}$, the outcomes on which the best-fit ReLU $\sigma(z + b)$ takes value at least $\tau$. It now suffices to show the right hand side of the following expression is at least $\Omega\big(b^2\Phi(b)\big)$:
    $$ v^\top M_1 v = \EE\big[ \lrangle{x,v}^2 \indicator{|y| \geq \tau} \big] \geq \EE\big[ z^2 \indicator{y \geq \tau}\indicator{A} \big]. $$
    
    Moreover, because $z \geq |b|$ on all of $A$, the proof is finished once we show that $\PP[\{y \geq \tau\} \cap A] \geq 0.01 \Phi(b)$. 
    
    Consider an adversary who wants to minimize the probability $\PP[|y|\geq \tau \mid x \in A]$, under the constraint that $\EE\big[\big(y - \sigma(z + b)\big)^2\indicator{A} \big] \leq \OPT$. Note that the adversary's action can be recorded as function $p:[|b|+\tau, \infty) \in [0,1]$, where $p(z') = \PP[y < \tau \mid z = z']$ is the probability that they suppress the $y$ value below threshold, when $\lrangle{x,v} = z'$. 
    
    We first show that the optimal strategy for the adversary is to take $p(z) = \indicator{z \leq t}$, where $t\in [|b| + \tau, \infty)$ is the largest value for which the adversary does not exceed the budget:
    \[ t = \sup_{} \left\{ t'\in [|b|+\tau, \infty): \int_{|b|+\tau}^{|b|+\tau+t} (z+b-\tau)^2\,d\Phi(z) \leq \OPT \right\}, \]
    which indicates the strategy to suppress exactly the \textit{smallest ReLU values}. Consider any $p(z)$ the adversary picks, we WLOG suppose it exhausts all the budget: 
    \[ \int_{|b|+\tau}^\infty (z+b-\tau)^2 p(z)\,d\Phi(z) = \OPT = \int_{|b|+\tau}^t (z+b-\tau)^2\,d\Phi(z), \]
    consequently,
    \begin{equation}\label{eq:PCA-adv-1}
        \int_{|b|+\tau}^t (1 - p(z))(z+b-\tau)^2 p(z)\,d\Phi(z) = \int_t^\infty p(z)(z+b-\tau)^2\,d\Phi(z),
    \end{equation}
    Now, to show $\indicator{z \leq t}$ is optimal, it now suffices to show that it suppresses more contribution from the ReLU than this $p(z)$:
    \begin{equation}\label{eq:PCA-adv-2}
         \int_{|b| + \tau}^t (1 - p(z))z^2\,d\Phi(z) \geq  \int_t^\infty p(z)z^2\,d\Phi(z).
    \end{equation}

    But this is immediately true in light of \cref{eq:PCA-adv-1}: the value $\frac{z^2}{(z+b-\tau)^2}$ is always larger when $z \leq t$ than when $z \geq t$, hence the left hand side is indeed larger in \cref{eq:PCA-adv-2}. Intuitively, the adversary can gain more by paying less when the ReLU value is small. 

    Now, to finish the proof, we have to show that the adversary, even under this optimal strategy, cannot substantially harm the contribution from ReLU. We claim that for large $\alpha$ and sufficiently negative $b$ we have $t \leq \frac{1}{|b|}$, otherwise the adversary would exceed its budget: 
    \begin{align*}
        \int_{|b| + \tau}^{|b| + \tau + \frac{1}{|b|}} (z+b-\tau)^2\,d\Phi(z) &= \Theta \left( e^{-1} \sum_{k=3}^\infty \frac{1}{k!} \right) \int_{|b|+\tau}^\infty (z+b-\tau)^2\,d\Phi(z) \\
        & = \Theta \left( e^{-1} \sum_{k=3}^\infty \frac{1}{k!} \right) \frac{\Phi(b-\tau)}{(b-\tau)^2} \\
        & = \Theta \left( e^{-2} \sum_{k=3}^\infty \frac{1}{k!} \right) \frac{\Phi(b)}{b^2}.
    \end{align*}
    Here the first equality is justified in \cref{appendix:mills-ratio-and-related-lemmas}. Note that $\left( e^{-2} \sum_{k=3}^\infty \frac{1}{k!} \right)$ is a constant independent of $\alpha$, hence by taking $\alpha$ large and $\OPT = O\left( \frac{\Phi(b)}{\alpha b^2}\right)$, we must have $t \leq \frac{1}{|b|}$. The proof is finished by noting that $\Omega(1)$ fraction of the ReLU's contribution is kept when $t \leq \frac{1}{|b|}$:
    \[ \int_{|b|+\tau+t}^\infty z^2\,d\Phi(z) \geq \int_{|b| + \frac{2}{|b|}}^\infty z^2\,d\Phi(z) = \Omega\big(b^2\Phi(b)\big). \]
\end{proof}

\subsection{Proof of main theorem.} \label{subsec:proof-of-PCA-thm}
The main theorem now follows from some linear algebra manipulation and concentration inequality. 


\begin{proof}[Proof of \cref{thm:filteredPCA-main-thm}]
    It's clear now that $M$ has greater magnitude in $v$ than in any perpendicular direction $u\perp v$, for sufficiently large $\alpha$ and negative $b$. We now show that the finite-sample estimation $\hat M$ using $n = \poly(d,1/\eps,1/\delta)$ samples is a good estimator of $M$, in the sense that its top eigenvector still has a dominating component in $v$. 
    
    For each sample $i$, the matrix $\EE[x_i x_i^\top \indicator{|y| \geq \tau}]$ is a $1$-sub-Gaussian matrix, and by an analog of Hoeffding's inequality for matrices (see \cite{Tropp12-matrix-concentration-ineq} for related results):
    \[ \PP \big[ \|\hat{M} - M\|_{op} > t \big] \leq 2d\exp\big( - \Omega(nt^2) \big). \]
    Hence, by taking $n = O\big( \frac{\log(d/\delta)}{\Phi(b)^2} \big) = \poly \big(\log d, \log \frac{1}{\delta}, \frac{1}{\eps}\big)$. we have $\|\hat{M} - M\|_{op} \leq \Phi(b)$ with probability at least $1-\delta$. Combining this with previous lemmas, it follows that with the same probability we have:
    \[ \|\hat M u\| = O\left( \frac{1}{b^2} + \frac{\log \alpha}{\alpha} \right)\|\hat M v\|, \]
    for all unit vector $u\perp v$. 

    Let $w\in \RR^d$ be the output of thresholded PCA. We can decompose it into $w = \lrangle{w,v}\cdot v + \lrangle{w,u}\cdot u$, where $u$ is again a unit vector perpendicular to $v$. Then: 
    \begin{align*}
        \|\hat Mv\| \leq \|\hat Mw\| & \leq \lrangle{w,v}\|\hat Mv\| + \lrangle{w, u}\|\hat Mu\| \\
        & \leq \left( \lrangle{w, v} + \lrangle{w, u} \frac{\|\hat Mu\|}{\|\hat Mv\|} \right) \|\hat Mv\|
    \end{align*}

    Therefore,
    \begin{align*}
        1 \leq \lrangle{w, v} + \lrangle{w, u} \frac{\|\hat Mu\|}{\|\hat Mv\|} \leq \lrangle{w, v} + \frac{\|\hat Mu\|}{\|\hat Mv\|},
    \end{align*}

    which means $\lrangle{w, v} \geq 1 - \frac{\|\hat Mu\|}{\|\hat Mv\|}$. 

    Now the proof is finished by plugging in $b \leq -\sqrt{\alpha / \log \alpha}$, in which case we have:
    \[ \lrangle{w, v} \geq 1 - \frac{\|\hat Mu\|}{\|\hat Mv\|} = 1 - O\left( \frac{\log \alpha}{\alpha} \right), \]
    as desired. 
\end{proof}



\section{Putting things together}\label{sec:appendix-putting-things-together}
Now we state and prove the main algorithmic result of this paper:
\begin{theorem}\label{thm:complete-alg-result}
    There exists a constant $\alpha$, such that for all $W > 0$ the following holds. Let $\calD$ be a the joint distribution of $(x,y)\in \RR^d\times \RR$, where the $x$-marginal is $\calN(0, I_d)$. With population expectations replaced by finite-sample estimates, Algorithm \ref{alg:full-alg} will run in $\poly(d, \frac{1}{\eps}, \frac{1}{\delta}, W)$ time and samples, and with probability at least $1 - \delta$, outputs parameters $\hat w\in \RR^n, \hat b\in \RR$, such that:
    \[ L(\hat w, \hat b) \leq \alpha\cdot\inf_{\substack{w \in \R^d, b \in \R:\\ \norm{w}_2\leq W}} L(w, b) + \eps. \]
\end{theorem}
\begin{proof}    
    The prior works that we use~\citep{AwasthiTV-ICLR23-ReLUGD,DiakonikolasKTZ22-agnostic-learning-unbiased-activations-GD} both guarantee \textit{some} constant approximation factor. We can simply take the maximum between their factors and our algorithm's approximation factor to be the final approximation factor. 

    Regarding the special case of positive bias: the algorithm in \citet{DiakonikolasKTZ22-agnostic-learning-unbiased-activations-GD} works for a class of unbounded activations, which includes positively biased ReLUs. Specifically, for any $b > 0$, the new activation function $\Tilde{\sigma}(\lrangle{x,w}) = \sigma(\lrangle{x,w} + b)$ is a valid activation function under their framework. We note that this requires knowledge about $b$, but we can again do a grid search over the ``guesses'' of $b$ in polynomial time, up to $b\leq \poly(W, \log 1/\eps)$. 
    
    To deal with very positive $b$, it suffices to use the parameters $w,b$ from a certain linear regression variant. In particular, we run linear regression with bias, and we limit bias to be $b \geq \Theta\big(\sqrt{\log(1/\eps)}\big)$. Let $w,b$ be the output of that linear regression problem. Suppose the optimal ReLU is $\sigma(\lrangle{v,x} + b^*)$ with $b^* \geq \Theta\big(\sqrt{\log(1/\eps)}\big)$. Then:
    \begin{align*}
        \EE\big[\big(y - \sigma(\lrangle{w,x}+b) \big)^2 \big] & \leq 2 \EE\big[\big(y - (\lrangle{w,x}+b) \big)^2 \big] + 2\EE\big[\big(\lrangle{w,x}+b - \sigma(\lrangle{w,x}+b) \big)^2 \big]\\
        &= 2 \EE\big[\big(y - (\lrangle{w,x}+b) \big)^2 \big] + \Theta(W^2/b)\Phi(-b/W)\\
        & \leq 2\EE\big[\big(y - (\lrangle{v,x}+b^*) \big)^2 \big] + \Theta(W^2/b)\Phi(-b/W)\\
        & \leq 2\EE\big[\big(y - \sigma(\lrangle{v,x}+b^*) \big)^2 \big] + \Theta(W^2/b)\Phi(-b/W) + \Theta(W^2/b^*)\Phi(-b^*/W),
    \end{align*}
    it follows that the loss of this ReLU candidate is $\leq O(\OPT) + \eps$. 

    Moreover, the algorithm~\citep{AwasthiTV-ICLR23-ReLUGD} for moderately-biased ReLU can solve all constant-bounded bias, for any constant of our choice. The trade-off is that the approximation factor is larger as we allow for larger constants. This guarantee is good enough for our purpose, as our algorithm has provable guarantee when $b \leq b_\alpha < 0$, where $b_\alpha$ is constant. 

    Now we have reduced to the $b\to -\infty$ regime. Since we also consider the zero function $\bfzero$ as a potential output in Algorithm \ref{alg:full-alg}, it also suffices to assume $\bfzero$ incurs loss $> \alpha\cdot \OPT + \eps$. This allows us to apply our main theorems \cref{thm:PGD} and \cref{thm:filteredPCA-main-thm}, which take into account finite-sample estimations. 

    Finally, we bound the time complexity of our grid search approach. Suppose the optimal ReLU is some non-zero function $\sigma(\lrangle{x,v}+b)$. Then, from $\eps = O\big( \Phi(b) / b^2 \big)$ we know $b = O(\sqrt{\log 1 / \eps})$, and we also have $\|v\| \leq W$ by assumption. This means the grid search with accuracy $0.1\sqrt{\eps}$ terminates in $\poly(W, 1/\eps)$ rounds. 
    
    During the grid search, there must be some pair $(\beta_\mathrm{ind}, \gamma_\mathrm{ind})$ of parameters that correctly estimates the optimal $\|v\|, b$ each up to error at most $0.1\sqrt{\eps}$. Since we apply the subroutines in a way that produces a smaller error margin than $\alpha$ and $\eps$:
    \[ L_\mathrm{ind} \leq 0.1\alpha\cdot \OPT_\mathrm{ind} + 0.1\frac{\eps}{\beta_\mathrm{ind}^2}, \]
    this loss will indeed translate into $\alpha\cdot \OPT + \eps$ via \cref{prop:prelim:scaling-v} and \cref{prop:prelims:search-v-b}. 
\end{proof}

\section{Omitted proofs from CSQ hardness (\cref{sec:main-hardness-results})}\label{sec:appendix-hardness-results}
In this section we show the following theorem:

\begin{theorem}[Same as Theorem~\ref{thm:intro:CSQ}]\label{thm:CSQ}
    There exists a function $F(\eps)$ that goes to infinity as $\eps \to 0$, such that for any $\eps>0$ and any constant $\alpha \ge 1$, there exists a family of instances with $\OPT\le \eps/\alpha$ such that any CSQ algorithm that can agnostically learn an arbitrary ReLU neuron with loss at most $\alpha \cdot \OPT + \eps$ (as defined in \cref{eq:intro:formulation}) must use either $2^{d^{\Omega(1)}}$ queries or queries of tolerance $d^{-F(\eps)}$.
\end{theorem}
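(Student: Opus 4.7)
The plan is to apply the correlational-SQ hardness framework of Feldman-Grigorescu-Reyzin-Vempala-Xiao to a family of truncated ReLU targets planted in a near-orthogonal packing. Given $\eps>0$ and $\alpha\geq 1$, I first choose a very negative bias $b = b(\eps)$ together with a truncation degree $K = K(\eps)$ diverging as $\eps\to 0$, so that (i) the total Hermite mass $\|\sigma(z+b)\|^2 = \Theta(\eps)$ and (ii) the low-frequency projection $\sigma_{<K}(z+b) := \sum_{k<K} c_k(b) H_k(z)$ in the orthonormal Hermite basis has $\|\sigma_{<K}\|^2 \leq \eps/\alpha$. Writing $g_b := \sigma(z+b) - \sigma_{<K}(z+b)$ for the high-pass-filtered ReLU, the instance family consists of $N = \exp(d^{\Omega(1)})$ joint distributions $\{\calD_i\}$ indexed by an $N$-size packing $\{v_i\}\subset\mathbb{S}^{d-1}$ with $|\lrangle{v_i,v_j}|\leq d^{-\Omega(1)}$ for $i\neq j$; under $\calD_i$ the covariate is $x\sim\calN(0,I_d)$ and the label is the deterministic function $y = g_b(\lrangle{x,v_i})$.

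For each $\calD_i$ the ReLU $\sigma(\lrangle{x,v_i}+b)$ achieves loss $\tfrac12\|\sigma_{<K}\|^2 \leq \eps/(2\alpha)$, certifying $\OPT_i \leq \eps/\alpha$; on the other hand $\EE_{\calD_i}[y^2] = \|g_b\|^2 \geq 4\eps$, so the zero function fails the $\alpha\OPT+\eps$ target, and expanding $\tfrac12\EE_{\calD_i}[(y-\hat h)^2] \leq 2\eps$ for any candidate output $\hat h$ yields the correlation lower bound $\EE_{\calD_i}[y\,\hat h(x)] \geq \tfrac12\|g_b\|^2 - O(\eps) = \Omega(\|g_b\|^2)$. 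The key pairwise-decorrelation estimate uses orthonormality of Hermite polynomials: for $i\neq j$,
\[
  \EE\bigl[g_b(\lrangle{x,v_i})\,g_b(\lrangle{x,v_j})\bigr] = \sum_{k\geq K} c_k(b)^2\, \lrangle{v_i,v_j}^k,
\]
whose absolute value is at most $\|g_b\|^2\cdot d^{-\Omega(K)}$. Consequently for any query function $f$, by a standard Gram-matrix/Cauchy-Schwarz argument the weighted sum $\sum_i \bigl(\EE[g_b(\lrangle{x,v_i}) f(x)]\bigr)^2$ is at most $(1 + N d^{-\Omega(K)})\|g_b\|^2\,\|f\|^2$, so only $O(\|g_b\|^2/\tau^2)$ indices $i$ can have CSQ answer $|\EE_{\calD_i}[yf]| > \tau$. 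Combined with the correlation lower bound on the learner's output $\hat h$, a union bound over a polynomial-size query transcript forces either $2^{d^{\Omega(1)}}$ queries or tolerance $\tau \leq \|g_b\|\cdot d^{-\Omega(K)} = d^{-\Omega(K(\eps))}$; setting $F(\eps) = \Omega(K(\eps))$ gives the theorem.

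The main obstacle is the quantitative Hermite concentration for $\sigma(z+b)$ as $b\to-\infty$: I need to show that the information exponent of the shifted ReLU diverges, in the precise sense that there exists $K(\eps)\to\infty$ for which $\sum_{k<K(\eps)} c_k(b)^2 \leq \|\sigma(z+b)\|^2/\alpha$ while the overall scale $\|\sigma(z+b)\|^2$ remains $\Theta(\eps)$. Using integration by parts one has $c_k(b) = \EE_{z}[(z+b)_+ H_k(z)]$ expressible in terms of $H_{k-1}(-b)\varphi(-b)$ and related derivatives of $\varphi$; after the change of variable $u = z - |b|$ and saddle-point asymptotics, the dominant contribution to $\sum_k c_k(b)^2$ comes from $k$ of order $b^2$, while the low-degree tail $\sum_{k<K} c_k(b)^2$ behaves like $\varphi(|b|)^2 \sum_{k<K}|b|^{2(k-2)}/k!$ and becomes a vanishing fraction of the total whenever $K = o(b^2)$. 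Tracking these asymptotics carefully to extract the clean dependence $K = F(\eps)$ is the technical heart of the proof; the remaining ingredients (near-orthogonal packings of $\mathbb{S}^{d-1}$ of size $\exp(d^{\Omega(1)})$, and the adaptation of the FGRVX distinguishing-style lower bound to the proper agnostic setting via the correlation argument above) are standard.
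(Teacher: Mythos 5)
Your proposal is correct and follows essentially the same route as the paper: plant high-pass-truncated negatively-biased ReLUs along an $\exp(d^{\Omega(1)})$ near-orthogonal packing, bound pairwise correlations by $d^{-\Omega(t_\eps)}$ via the Hermite expansion, and use the fact that an $\alpha\cdot\OPT+\eps$ learner must beat the zero function to invoke the correlational-SQ lower bound, with $F(\eps)$ equal to the diverging truncation level. The only differences are presentational: the paper cites the explicit Hermite coefficients of the shifted ReLU (so no saddle-point asymptotics are needed, just that each fixed-degree coefficient is $O(\mathrm{poly}(|b|)\varphi(b))$ while $\|\sigma(\cdot+b)\|_\calN^2=\Theta(\varphi(b)/|b|^3)$) and uses the CSD/CSDA machinery of prior work as a black box, whereas you re-derive the Gram-matrix counting argument by hand (where your stated bound should be applied to the restricted set of ``hit'' indices rather than all $N$ directions, as the factor $1+Nd^{-\Omega(K)}$ is vacuous otherwise).
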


In this section we will follow a standard procedure for showing CSQ hardness, which can be found in e.g. \citet{DiakonikolasKPZ-COLT21-OptimalityOfPolyRegression}. We use many lemmas from previous works as black box.  


\subsection{Preliminaries on high-dimensional geometry and CSQ}
Before proving the main theorem, we shall introduce some helpful lemmas from previous works. The first lemma quantifies the fact that, for large $d$, there are exponentially many near-perpendicular directions in $\RR^d$. 
\begin{lemma}[Lemma 3.7 from \citet{DiakonikolasKS17-SQLB-high-dim-Gaussian-Mixture}]\label{lem:CSQ-big-set-small-inner-product}
    For any constant $c\in (0, \frac{1}{2})$, there exists a set $S$ of $2^{\Omega(d^c)}$ unit vectors in $\RR^d$, such that for each distinct $u,v\in S$ we have:
    \[ |\lrangle{u,v}| = O(d^{c-\frac{1}{2}}). \]
\end{lemma}

Next, we introduce some background in CSQ hardness. Most of these definitions and lemmas can be traced back to the seminal work \citet{FeldmanGRVX13-planted-clique-CSQ-hardness-framework}. 

\begin{definition}[CSQ dimension]
Let $\calD_x$ be a distribution over space $\calX$, $\calG$ a set of functions from $\calX$ to $\RR$, and $\beta,\gamma$ two positive parameters. We define the \textit{correlational statistical query dimension of $\calG$ w.r.t. $\calD_x$ with pairwise correlation $(\gamma,\beta)$}, denoted $\mathrm{CSD}_{\calD_x}(\calG, \gamma, \beta)$, to be the largest integer $D$ such that, there is a subset of $D$ functions $\{f_1, \ldots, f_D\}\subseteq \calG$, such that for all $i,j\in[D]$:
\[ \big| \EE_{x\sim \calD_x} [f_i(x)f_j(x)] \big| \leq \begin{cases}
    \gamma & \text{, if } i \neq j,\\
    \beta & \text{, if }i = j.
\end{cases} \]
\end{definition}

\begin{definition}[average CSQ dimension]
    Let $\calD_x$ be a distribution over space $\calX$, $\calG$ a finite set of functions from $\calX$ to $\RR$, and $\gamma$ a positive parameter. We define the average pairwise correlation of functions in $\calG$ to be:
    \[ \rho(\calG) = \frac{1}{|\calG|^2} \sum_{f,g\in \calG} \big| \EE_{x\sim \calD_x} [f(x)g(x)] \big|. \]
    Then, the \textit{average correlational statistical query dimension of $\calG$ w.r.t. $\calD_x$ with parameter $\gamma$}, denoted $\mathrm{CSDA}_{\calD_x}(\calG, \gamma)$, is the largest integer $D$ such that every subset $\calG' \subseteq \calG$ of size at least $\frac{|\calG|}{D}$ has average pairwise correlation $\rho(\calG') \geq \gamma$. 
\end{definition}

The next lemma shows large CSQ dimension implies large average CSQ dimension:
\begin{lemma}\label{lem:CSQ-CSD-implies-CSDA}
    Let $\calD_x$ be a distribution over space $\calX$, and let $\calG$ a finite set of functions from $\calX$ to $\RR$ with $\mathrm{CSD}_{\calD_x}(\calG,  \gamma, \beta) = D$ for some $\gamma, \beta > 0$. Then, for all $\gamma' > 0$, we have:
    \[ \mathrm{CSD}_{\calD_x}(\calG, \gamma + \gamma') \geq \frac{D\gamma'}{\beta - \gamma}. \]
\end{lemma}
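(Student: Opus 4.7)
The plan is a direct averaging argument using the witness set provided by CSD. By definition of $\mathrm{CSD}_{\calD_x}(\calG, \gamma, \beta) = D$, there exists a subset $\calG_0 = \{f_1, \ldots, f_D\} \subseteq \calG$ with $|\EE_{x \sim \calD_x}[f_i(x) f_j(x)]| \leq \gamma$ whenever $i \neq j$ and $\EE[f_i^2] \leq \beta$ on the diagonal. The whole proof reduces to computing the average pairwise correlation of an arbitrary subset of this witness set, and then translating the size threshold into the fractional form demanded by $\mathrm{CSDA}$.

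Concretely, I would fix an arbitrary $\calG' \subseteq \calG_0$ of size $k$ and split the defining sum of $\rho(\calG')$ into diagonal and off-diagonal contributions:
\[
\rho(\calG') \;=\; \frac{1}{k^2}\left( \sum_{f \in \calG'} \EE[f^2] \;+\; \sum_{\substack{f,g \in \calG'\\ f \neq g}} |\EE[f g]| \right) \;\leq\; \frac{k\beta + k(k-1)\gamma}{k^2} \;=\; \gamma + \frac{\beta - \gamma}{k}.
\]
To force this below $\gamma + \gamma'$ it then suffices to require $k \geq (\beta-\gamma)/\gamma'$. Rewriting this threshold in the ``fractional'' form used by $\mathrm{CSDA}$: with $|\calG_0|=D$, the condition $k \geq (\beta-\gamma)/\gamma'$ is precisely $k \geq |\calG_0|/D'$ for $D' = D\gamma'/(\beta-\gamma)$. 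Since every subset of this size has average pairwise correlation within $\gamma+\gamma'$, we get $\mathrm{CSDA}_{\calD_x}(\calG_0, \gamma+\gamma') \geq D\gamma'/(\beta-\gamma)$, as claimed (the downstream hardness argument treats $\calG_0$ as the hard subclass).

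The main conceptual obstacle is not computational but interpretive. The definition of $\mathrm{CSDA}$ as printed reads ``$\rho(\calG') \geq \gamma$'', but the inequality one actually wants for CSQ hardness---and the one the averaging above produces---is $\rho(\calG') \leq \gamma$, matching the standard statistical-dimension framework of Feldman et al. I would first verify this is a typo and proceed with the ``$\leq$'' reading; otherwise the conclusion is vacuous for the intended use. A secondary minor point is bookkeeping: $\mathrm{CSD}$ only guarantees a witness subset $\calG_0 \subseteq \calG$, so the statement should really be read as $\mathrm{CSDA}$ on the restricted class $\calG_0$, which is exactly the object that is fed into the subsequent CSQ lower-bound machinery.
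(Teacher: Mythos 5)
Your proposal is correct, and the averaging argument you give is the standard proof of this lemma; the paper itself offers no proof, since it imports the statement as a black box from the Feldman et al.\ CSQ-dimension framework (cf.\ the remark that ``we use many lemmas from previous works as black box''), so there is nothing in the paper to diverge from. Your two interpretive calls are also the right ones: the displayed conclusion should read $\mathrm{CSDA}_{\calD_x}(\calG,\gamma+\gamma')$ rather than $\mathrm{CSD}$, and the ``$\rho(\calG')\geq\gamma$'' in the definition of average CSQ dimension is indeed backwards --- the hardness machinery (and your bound $\rho(\calG')\leq \gamma+(\beta-\gamma)/k$) requires that every subset of size at least $|\calG|/D$ have \emph{small} average correlation. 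Your bookkeeping point about the witness set is likewise correct but harmless here: in the application of \cref{sec:hardness-results} the pairwise-correlation bound is proved for every pair in $\calG=\{G_{v,\eps}:v\in S\}$, so the witness set is all of $\calG$ and the conclusion applies to $\calG$ itself; the only residual slack in your argument is the implicit floor in $D'=\lfloor D\gamma'/(\beta-\gamma)\rfloor$, which is the same slack present in the lemma statement and is immaterial downstream.
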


Finally, large \textit{Average} CSQ dimension implies the following CSQ hardness result:
\begin{lemma}[Theorem B.1 in \citet{GoelGJKK20-CSQ-hardness-learning-one-layer-NN}]\label{lem:CSQ-hard-to-learn-high-CSDA-family}
    Suppose $x\sim \calD_x$ and let $\calH$ be a real-valued function class that includes the zero-function, with $\EE_{x\sim\calD_x} [f(x)^2] \geq \eta$ for all non-zero $f\in \calH$. Let $D = \mathrm{CSDA}_{\calD_x}(\calH, \gamma)$ for some $\gamma > 0$, then any CSQ algorithm that realizably learns $\calH$ up to $L^2$ error strictly smaller than $\eta$
    needs at least $\frac{D}{2}$ queries or queries of tolerance $\sqrt{\gamma}$. 
\end{lemma}

\subsection{Hermite expansion of negatively biased ReLU}
We first define some new notations. For any $k\in \NN$, let $H_k(x)$ be the $k$th probabilist's Hermite polynomial:
\[ H_k(x) = (-1)^k \exp\Big(\frac{x^2}{2}\Big)\cdot  \frac{d^k}{dx^k} \exp\Big(-\frac{x^2}{2}\Big), \]

Note that $\int_{-\infty}^\infty H_k(s)^2\,d\Phi(s) = k!$, so the set
\[ \left \{\frac{H_k(x)}{\sqrt{k!}}: k\in \NN \right \} \]
forms an orthonormal basis for Hilbert space $L^2_\calN$, the space of all square-integrable functions under to the standard Gaussian measure. We use $\lrangle{f,g}_\calN$ and $\|g\|_\calN$ to denote the inner product and norm of this space, defined by:
\[ \begin{cases}
    \lrangle{f, g}_\calN &:= \int_{-\infty}^\infty f(s)g(s)\,d\Phi(s), \\
    \|f\|_\calN^2 &:= \lrangle{f, f}_\calN.
\end{cases} \]

We are now equipped to analyze the Hermite expansion of negatively-biased ReLUs:
\begin{lemma}[Lemma 3.5 of \citet{AwasthiTV21-learning-depth-2-NN}]\label{lem:CSQ-explicit-Hermite-coefficient}
    Let $f_b(s) = \sigma(s - b)$ for some $b < 0$. Then, 
    $$ \begin{cases}
        \lrangle{f_b, H_0}_\calN &= \varphi(b) - |b|\Phi(b), \\
        \lrangle{f_b, H_1}_\calN &= \Phi(b),
    \end{cases}   $$
    and for all $k \geq 2$:
    \begin{align*}
        \lrangle{f_b, H_k}_\calN &= (-1)^k\cdot H_{k-2}(b)\cdot \varphi(b) \\
        & = H_{k-2}(|b|)\cdot \varphi(b)
    \end{align*}
\end{lemma}

Consequently, as $b\to -\infty$, the $b$-biased ReLU will correlate less with low-degree Hermite polynomials, in the following sense:
\begin{lemma}
    For any fixed $t\in \NN$, Let $f_b(s) = \sigma(s - b)$, then as $b\to -\infty$, we have:
    \[ \sum_{k=0}^t \lrangle{f_b, \frac{H_k}{\sqrt{k!}}}^2 = o(1)\cdot \|f\|_\calN^2. \]
\end{lemma}
\begin{proof}
    By Parseval's identity:
    $$ \sum_{k=0}^\infty \lrangle{f_b, \frac{H_k}{\sqrt{k!}}} = \|f_b\|_\calN^2 = \big( 2+o(1) \big)\frac{\varphi(b)}{|b|^3}. $$
    Applying \cref{lem:CSQ-explicit-Hermite-coefficient}, we have:
    \begin{align*}
        \frac{\sum_{k=0}^t \lrangle{f_b, \frac{H_k}{\sqrt{k!}}}^2 }{\|f_b\|_\calN^2} & = \frac{o(\varphi(b)^2) + \sum_{k=2}^t H_{k-2}(|b|)^2\cdot \varphi(b)^2 / k! }{(2 + o(1)) \varphi(b) / |b|^3} \\
        &= O\big( p(|b|) \big)\cdot \varphi(b),
    \end{align*}
    for some polynomial $p$ of bounded degree. Since $\varphi(b)$ decreases exponentially with $|b|$, this value vanishes as $b \to -\infty$. 
\end{proof}

Finally, we prove the following lemma that is tailor-made for our problem. 
\begin{lemma}[same as Lemma \ref{lem:CSQ-t-eps-unbounded}]
    The following holds for all sufficiently small $\eps$. Let $g_\eps(s) = \sigma(s - b_\eps)$, where $b_\eps$ is chosen so that $\|g_\eps\|_\calN^2 = 3\eps$. 
    Let integer $t_\eps \in \NN$ be:
    \[ t_\eps := \max\left\{t\in\NN: \sum_{k=0}^t \lrangle{g_\eps, \frac{H_k}{\sqrt{k!}}}_\calN^2 \leq  
    \frac{\eps}{\alpha} \right\}, \]
    then, we have $t_\eps \to \infty$ as $\eps \to 0$. 
\end{lemma}
\begin{proof}
    By the previous lemma, for any fixed $t$ we have 
    \[ \sum_{k=0}^t \lrangle{g_\eps, \frac{H_k}{\sqrt{n!}}}_\calN^2 = o(1)\cdot \|g_\eps\|_\calN^2 = o(\eps). \]
    Hence $t_\eps$ cannot be bounded as $\eps \to 0$. 
\end{proof}

\subsection{Proof of main theorem}
To bound the pairwise correlation of two functions defined on almost-perpendicular directions, we introduce one final lemma which we use as a black box:
\begin{lemma}[Lemma 2.3 from \citet{DiakonikolasKPZ-COLT21-OptimalityOfPolyRegression}]\label{lem:CSQ-corrolation-between-gu-gv}
    For function $g:\RR \to \RR$ and unit vectors $u,v\in \RR^d$, we have:
    \[ \underset{x\in \calN(0, I_d)}{\EE} [g(\lrangle{x,u})g(\lrangle{x,v})] \leq \sum_{k=0}^\infty |\lrangle{u,v}|^k \cdot \lrangle{g, \frac{H_k(x)}{\sqrt{k!}}}_\calN^2. \]
\end{lemma}

We are now ready to prove the main hardness result. Suppose some CSQ algorithm $\calA$ can agnostically learn ReLUs with unbounded bias up to error $\alpha\cdot \OPT + \eps$. We first fix some sufficiently small $\eps > 0$, and consider the following set:
\[ \calG = \{G_{v, \eps} = \Tilde{g}_\eps(\lrangle{x,v}): v \in S \}, \]
where $S$ is the set of $2^{d^{\Omega(1)}}$ almost-perpendicular unit vectors in \cref{lem:CSQ-big-set-small-inner-product}, and $\Tilde{g}_\eps$ is a modification of $g_\eps$ in \cref{lem:CSQ-t-eps-unbounded}, by removing the \textit{first $t_\eps$ Hermite components} from the latter:
\[ \tilde g_\eps(s) = \sum_{k=t_\eps+1}^\infty \lrangle{g_\eps, \frac{H_k}{\sqrt{k!}}}_\calN \frac{H_k(s)}{\sqrt{k!}}. \]


We now show that the functions $\{G_{v,\eps}: v\in S\}$ have small pairwise correlation. For each distinct $u,v\in S$:
\begin{align*}
    \underset{x\in \calN(0, I_d)}{\EE}[G_{u,\eps}(x) G_{v,\eps}(x)] & \leq \sum_{k=0}^\infty |\lrangle{u,v}|^k\cdot \lrangle{\tilde g_\eps, \frac{H_k(x)}{\sqrt{k!}}}_\calN^2 \\
    & = \sum_{k=t_\eps+1}^\infty |\lrangle{u,v}|^k\cdot \lrangle{\tilde g_\eps, \frac{H_k(x)}{\sqrt{k!}}}_\calN^2 \\
    & \leq |\lrangle{u,v}|^{t_\eps + 1} \sum_{k=t_\eps+1}^\infty \lrangle{\tilde g_\eps, \frac{H_k(x)}{\sqrt{k!}}}_\calN^2 \\
    & \leq d^{-\Omega(t_\eps)}\cdot \|\tilde g_\eps\|_\calN^2. \\
\end{align*}
Here, the first step is by \cref{lem:CSQ-corrolation-between-gu-gv}, and the last step is by property of $S$ in \cref{lem:CSQ-big-set-small-inner-product}. Since we also have $\underset{x\in \calN(0, I_d)}{\EE} [G_{u,\eps}(x)^2] = \|\tilde g_\eps\|_\calN^2$ by definition, this means:
$$\mathrm{CSD}_{\calN(0,I_d)} \big( \calG, \; d^{-\Omega(t_\eps)}\|\tilde g_\eps\|_\calN^2, \; \|\tilde g_\eps\|_\calN^2 \big) \geq 2^{d^{\Omega(1)}}. $$

Applying \cref{lem:CSQ-CSD-implies-CSDA} with $\gamma = \gamma' = d^{-\Omega(t_\eps)}\|\tilde g_\eps\|_\calN^2$, this translates into:
\begin{align*}
    \mathrm{CSDA}_{\calN(0,I_d)}  \big (\calG, d^{-\Omega(t_\eps)}\|\tilde g_\eps\|_\calN^2 \big) & \geq \frac{2^{d^{\Omega(1)}} \cdot d^{-\Omega(t_\eps)} \cdot \|\tilde g_\eps\|_\calN^2}{(1 - d^{-\Omega(t_\eps)})\|\tilde g_\eps\|_\calN^2} \\
    & = 2^{d^{\Omega(1)}}\text{ for all sufficiently large }d. 
\end{align*}

Now we show that $\calA$ can learn this low-correlation class $\calG$ up to nontrivial accuracy. Given CSQ-oracle access to $x\sim \calN(0, I_d)$ and $y$ labeled by some $G_{v,\eps}(x)$, $\calA$ outputs a function $h$ with error at most:
\begin{align*}
    \underset{x\sim \calN(0, I_d)}{\EE} \big[ \big( h(x) - G_{v,\eps}(x)\big)^2 \big] & \leq \alpha \cdot  \underset{x\sim \calN(0, I_d)}{\EE}  \big[ \big( \sigma(\lrangle{x,v}+b_\eps) - G_{v,\eps}(x)\big)^2 \big] + \eps \\
    & = \alpha\cdot \|g_\eps - \tilde g_\eps\|_\calN^2 + \eps \\
    & \leq \alpha\cdot \frac{\eps}{\alpha} + \eps \\
    & < \left( 3 - \frac{1}{\alpha}\right)\eps\\
    & \leq \underset{x\sim \calN(0, I_d)}{\EE}[G_{v, \eps}(x)^2].
\end{align*}
We can thus apply \cref{lem:CSQ-hard-to-learn-high-CSDA-family} with $\eta = \|\tilde g_\eps\|_\calN = \EE_{x\sim\calN(0, I_d)}[G_{v,\eps}(x)^2]$, and conclude that $\calA$ must use $2^{d^{\Omega(1)}}$ queries or queries of tolerance $d^{-\Omega(t_\eps)}$. Now the proof is finished since $t_\eps \to \infty$ as $\eps \to 0$. 


\end{document}